\documentclass{article}
\usepackage[T1]{fontenc}

\usepackage{iclr2027_conference,times}
\iclrfinalcopy

\usepackage{amsmath,amsfonts,bm}

\def\eqref#1{equation~\ref{#1}}

\def\1{\bm{1}}

\DeclareMathAlphabet{\mathsfit}{\encodingdefault}{\sfdefault}{m}{sl}
\SetMathAlphabet{\mathsfit}{bold}{\encodingdefault}{\sfdefault}{bx}{n}

\newcommand{\E}{\mathbb{E}}

\newcommand{\R}{\mathbb{R}}

\usepackage{amsmath,amssymb,amsthm}
\usepackage{booktabs}
\usepackage{longtable}
\usepackage{float}
\usepackage{graphicx}
\usepackage{subcaption}
\usepackage{xcolor}
\usepackage{microtype}
\usepackage{url}
\usepackage[colorlinks=true,citecolor=blue,linkcolor=blue,urlcolor=blue]{hyperref}
\usepackage{tikz}
\usepackage{algorithm}
\usepackage{algorithmic}


\newtheorem{proposition}{Proposition}

\newcommand{\cN}{\mathcal{N}}
\providecommand{\R}{\mathbb{R}}
\providecommand{\E}{\mathbb{E}}
\newcommand{\nmax}{n_{\mathrm{max}}}

\title{Sample-Size Extrapolation in Simulation-Based Inference via Statistical Scaling}

\author{Shiyi Sun$^{1}$ \qquad Lei Fang$^{2}$ \qquad Geoff K. Nicholls$^{1}$}

\begin{document}
\maketitle
{\renewcommand{\thefootnote}{}\footnotetext{$^{1}$Department of Statistics, University of Oxford, Oxford, United Kingdom.
$^{2}$Mathematical Institute, University of Oxford, Oxford, United Kingdom.
Correspondence to: Shiyi Sun \textless{}shiyi.sun@spc.ox.ac.uk\textgreater{}.}}
\setcounter{footnote}{0}

\begin{abstract}
In simulation-based inference (SBI), neural posterior estimators trained on a bounded range of sample sizes can become inaccurate on larger datasets because
their parameterizations do not specify how posterior location, uncertainty and shape must change as observations accumulate. We propose center-estimated NPE (CE-NPE), which combines a statistical center with an explicit contraction rate and learns a local center correction and residual densities in posterior-scale coordinates. The center is a statistical estimate, such as a closed-form statistic or a simulator-based moment match, which does not require an evaluable likelihood. Our analysis explains how contraction amplifies center error and decomposes Gaussian posterior error into shape, mean and covariance contributions. On six regular benchmarks, CE-NPE gives the best reference-posterior agreement of all tested amortized estimators at every out-of-range size, up to five times the largest training size, without retraining; it also fits better within the training range on most of them. Matched controls and nonregular examples characterize the scope of these gains and identify the conditions on which they depend: an appropriate contraction rate, an accurate center, and a stable residual representation.
\end{abstract}

\section{Introduction}

Simulation-based inference (SBI) targets models whose likelihood cannot be evaluated but which can be simulated \citep{cranmer2020frontier}. Neural posterior estimation (NPE), its most direct form, amortizes Bayesian inference by learning a conditional density from simulations and reusing it on new datasets \citep{papamakarios2016fast,greenberg2019automatic,lueckmann2021benchmarking}. In deployment, however, the estimator may be asked to condition on \(n\) observations when \(n\) exceeds every sample size seen during training. Permutation-invariant encoders such as Deep Sets accommodate this change in cardinality \citep{zaheer2017deep}, but the posterior must also change its location, width and shape as evidence accumulates, and providing the network with $n$ as an input \citep{radev2020bayesflow,habermann2025amortized,mittal2025amortized} leaves these relationships to be learned over a bounded training range. Accepting arbitrary dataset sizes is therefore an architectural property, whereas extrapolating posterior uncertainty is a statistical one. We ask \textbf{how posterior estimators trained on small datasets can extend to larger sample sizes without retraining}.

Statistical asymptotics describe how a posterior should change with $n$, and therefore supply structure for this problem. Under regularity conditions, the Bernstein--von Mises theorem describes an approximately Gaussian posterior with width of order $n^{-1/2}$ after appropriate centering \citep[Ch.~10]{vandervaart1998asymptotic}. More generally, a justified contraction rate $n^{-\alpha}$, which nonregular models can change \citep{Bochkina2014}, suggests learning in local coordinates $n^\alpha(\theta-c_n)$. Working in these coordinates removes the need for the network to infer the leading contraction rate from the training range, but it makes center accuracy critical: a location error $\delta_n$ becomes a shift $n^\alpha\delta_n$ in those coordinates, so a small fixed bias can become large relative to posterior uncertainty. Moreover, even a center that contracts at the correct rate leaves errors in covariance and residual shape, so scaling alone does not ensure calibrated inference.

We propose center-estimated NPE (CE-NPE), which anchors the posterior at a statistical estimator $c_{\mathrm{stat}}(x_{1:n})$ and learns only a correction, multiplied by $n^{-\alpha}$. This design places the learned adjustment on the posterior's contraction scale and thereby reduces what must be extrapolated in parameter space; because it supplies structure that would otherwise have to be learned from simulations, it can also improve fitting within the training range. The anchor can be a closed-form statistic or a simulator-based moment estimate, so it does not require an evaluable likelihood, and the network still learns uncertainty, dependence and residual shape. Figure~\ref{fig:concept} relates CE-NPE to free conditioning and to structural scaling with a learned center, which serves as an intermediate ablation.

Our experiments examine both the benefit of this parameterization and its source: six adapted regular benchmarks test reference-posterior agreement beyond the training range, matched-center controls assess what scaling adds beyond the statistical estimator, best-effort baselines assess sensitivity to training effort, and nonregular examples test the assumptions behind the local representation. Throughout, we report in-range performance alongside extrapolation results, separating errors already present within the training range from those that arise beyond it.

\begin{figure}[t]
\centering
\begin{tikzpicture}[font=\scriptsize,
  box/.style={draw, rounded corners=2pt, align=center, inner sep=3.5pt}]
\node[box] (data) at (0.5,0) {$x_{1:n}$};
\node[box] (enc) at (2.6,0) {$s_n=e_\phi(x_{1:n})$};
\draw[->] (data) -- (enc);
\node[box] (free)  at (8.9,1.25) {(i)~free conditioning: $q_\phi(\theta\mid s_n,\log n)$\\
  \textcolor{black!55}{\scriptsize scale learned, center learned, shape learned}};
\node[box] (struc) at (8.9,0) {(ii)~structural scaling: $q_\phi(\theta\mid s_n,\log n)=r_\phi\!\big(n^{\alpha}(\theta-c_n)\mid s_n,\log n\big)\,n^{\alpha d}$\\
  \textcolor{black!55}{\scriptsize scale imposed, center learned, shape learned}};
\node[box] (ctx)   at (8.9,-1.4) {(iii)~center-estimated (CE-NPE): $c_n=c_{\mathrm{stat}}(x_{1:n})+n^{-\alpha}\Delta_\phi(s_n,\log n)$\\
  \textcolor{black!55}{\scriptsize scale imposed, center anchored at a statistic plus a learned local correction, shape learned}};
\draw[->] (enc) -- (free.west);
\draw[->] (enc) -- (struc.west);
\draw[->] (enc) -- (ctx.west);
\end{tikzpicture}
\caption{\textbf{Progressively removing what the network must extrapolate.}  All three constructions share a permutation-invariant encoder and differ in which ingredients of the local coordinates $z_n=n^{\alpha}(\theta-c_n)$ are learned from the bounded training range and which are imposed: (i) free conditioning learns scale, center, and shape; (ii) structural scaling imposes the scale; (iii) CE-NPE additionally anchors the center at a statistical estimator. }
\label{fig:concept}
\end{figure}

\paragraph{Contributions.}
\begin{itemize}
\item \textbf{A structured parameterization}. We introduce CE-NPE, which combines a statistical center with an explicit contraction rate and learns the center correction and residual density
in local coordinates. The center need not evaluate the likelihood and may be constructed from simple statistics or simulator access, so the estimator remains likelihood-free, and the parameterization reduces what the network must extrapolate beyond the training sample sizes (Section~\ref{sec:methods}). The ingredients are classical; the contribution is to place them inside an amortized estimator and to show that its out-of-range gains are attributable to this structure rather than to network capacity.
\item \textbf{Explaining the errors that remain after scaling.}
We show how posterior contraction amplifies center error and separate Gaussian posterior error into shape, mean and covariance contributions. These results explain why correct scaling and a contracting center alone do not guarantee accurate, calibrated inference (Section~\ref{sec:anatomy}).
\item \textbf{Testing the benefits and limits of structural extrapolation.}
We test whether explicit statistical structure improves amortized posterior estimation beyond the training sample-size range, under matched architectures and simulation budgets. On six regular benchmarks \citep{lueckmann2021benchmarking}, center anchoring gives the best reference agreement at every out-of-range size against free conditioning, structural scaling with a learned center, and a compositional score-based estimator \citep{geffner2023compositional}. Fixed-center controls isolate the contribution of scaling, while best-effort baselines assess sensitivity to training effort. Four nonregular prototypes with a faster rate, a boundary, a ridge and persistent multimodality show where the imposed structure fails, and a real-data application to 1000 Genomes windows \citep{auton2015global} assesses consistency and held-out prediction without a reference posterior (Sections~\ref{sec:experiments} and~\ref{sec:discussion}; Appendices~\ref{app:regular}--\ref{app:popgen}).
\end{itemize}

\section{Related work}

\paragraph{Neural simulation-based inference.}
Neural SBI learns posterior densities, likelihoods, or likelihood ratios from simulations \citep{papamakarios2016fast,greenberg2019automatic,papamakarios2019sequential,hermans2020likelihoodfree}; see \citet{cranmer2020frontier} for a survey. Neural posterior estimation fits a conditional density estimator to simulated parameter--data pairs, either sequentially around one observation \citep{greenberg2019automatic,deistler2022truncated} or amortized over the prior, and its density family has moved from normalizing flows \citep{durkan2019neural} to diffusion heads \citep{chen2025conditional} and to transformers that serve many conditional queries with one model \citep{gloeckler2024allinone}. Amortization has also been extended to misspecified simulators \citep{ward2022robust} and beyond the standard Bayesian target, to generalized Bayesian posteriors indexed by a loss temperature \citep{sun2026amortized}. Benchmarks \citep{lueckmann2021benchmarking} and calibration studies \citep{hermans2022trust} evaluate these estimators at fixed sample sizes. We study a different axis: how an amortized estimator trained on a bounded range of sample sizes transfers to larger datasets, and we introduce explicit statistical structure into its dependence on sample size.

\paragraph{Inference with variable-size datasets.}
Deep Sets and Set Transformer provide permutation-invariant representations of variable-size inputs \citep{zaheer2017deep,lee2019set}, and \citet{chan2018likelihood} apply exchangeable networks to likelihood-free inference from population genetic data. BayesFlow combines learned summaries with amortized posterior estimation, trains across variable numbers of observations, and shows posterior contraction empirically as observations accumulate \citep{radev2020bayesflow}. More recent amortized estimators condition transformers on sets of observations of varying size, within multilevel models \citep{habermann2025amortized} and as in-context posterior estimation \citep{mittal2025amortized}. These architectures accommodate changing dataset size, but do not themselves prescribe a contraction rate. CE-NPE retains a set encoder and adds a specified rate and a statistical center to the posterior parameterization.

\paragraph{Compositional inference.}
For conditionally independent observations, learned likelihoods or likelihood ratios can be combined across observations to perform inference at different sample sizes \citep{papamakarios2019sequential,hermans2020likelihoodfree}. Compositional score methods similarly construct posterior samplers from networks trained on individual observations or small subsets \citep{geffner2023compositional,linhart2026diffusion}. These methods exploit factorization to accommodate additional observations, but require the likelihood to factorize across subsets. CE-NPE instead models the full-data posterior in centered, rescaled coordinates, relying on an accurate statistical anchor and a stable residual representation rather than on a factorization (comparison with PF-NPSE in Appendix~\ref{app:pfnpse}).

\paragraph{Regression adjustments and learned summaries.}
ABC regression adjustment modifies simulated parameter draws to account for discrepancies between simulated and observed summaries \citep{beaumont2002approximate}, and semi-automatic ABC uses regression estimates of posterior means as summary statistics \citep{fearnhead2012constructing}. CE-NPE uses an estimator for a different purpose, anchoring the coordinates in which an amortized posterior is learned, with a learned correction expressed in local coordinates and mapped to parameter space through $n^{-\alpha}$.

\paragraph{Asymptotic structure in posterior estimation.}
Under regularity conditions, the Bernstein--von Mises theorem motivates centering and rescaling a posterior by $\sqrt{n}$ \citep[Ch.~10]{vandervaart1998asymptotic}. Asymptotic reasoning also informs parametric choices for amortized posterior approximation \citep{maceda2026variational}, while \citet{frazier2024statistical} study the statistical accuracy of neural posterior and likelihood estimation. Our focus is transfer from a bounded training range in sample size: a justified contraction rate defines local coordinates, and we analyze the remaining errors from centering, covariance and residual shape.

\section{Methods}
\label{sec:methods}

Let $\theta\in\Theta\subseteq\mathbb{R}^d$ have prior $p(\theta)$, and let $x_{1:n}\overset{\mathrm{i.i.d.}}{\sim}p(\cdot\mid\theta),$ where simulation from the generative model is available although the likelihood need not be evaluable. We train amortized estimators on simulated datasets with $n\leq n_{\max}$ and evaluate them at sample sizes $n>n_{\max}$. Every estimator is trained with the standard NPE objective, the log density assigned to the simulated parameter given the simulated dataset and its size \citep{papamakarios2016fast,greenberg2019automatic}. Because the observations are exchangeable, all estimators share the same permutation-invariant Deep Sets encoder $s_n=e_\phi(x_{1:n})$ \citep{zaheer2017deep,radev2020bayesflow}; the comparison focuses on how the posterior parameterization depends on $n$, given a representation that already accepts variable-size datasets. The three constructions of Figure~\ref{fig:concept} differ only in how much of that dependence is learned and how much is imposed.

\subsection{Free conditioning}
\label{sec:free}

The most direct approach is to provide sample size as an additional conditioning variable:
\begin{equation}
q_\phi^{\mathrm{free}}(\theta\mid x_{1:n},n)
=
q_\phi(\theta\mid s_n,\log n).
\label{eq:free}
\end{equation}
The logarithm only places the sample-size input on a convenient numerical scale. Apart from this explicit input, the construction is the standard one when a set encoder is combined with amortized posterior estimation \citep{zaheer2017deep,radev2020bayesflow}. It imposes no constraint on how posterior location, scale, or shape evolve with $n$: any contraction must be inferred from the values of $n$ seen in training, allowing flexible interpolation within the training range but requiring the learned dependence to be extended beyond it.

\subsection{Structural scaling}
\label{sec:structural}

In regular parametric problems the Bernstein--von Mises theorem implies that posterior uncertainty shrinks at the leading rate $n^{-1/2}$ \citep[Ch.~10]{vandervaart1998asymptotic}. More generally, suppose the relevant contraction rate is $n^{-\alpha}$, with $\alpha=1/2$ in the regular case, and define the rescaled variable
\begin{equation}
z_n
=
n^\alpha(\theta-c_n),
\label{eq:residual}
\end{equation}
where $c_n$ is a data-dependent center. In the basic structural estimator the center is learned from the encoded dataset, $c_n = c_\phi(s_n,\log n)$. Rather than modelling $\theta$ directly, we learn a conditional density $r_\phi$ for $z_n$:
\begin{equation}
z_n\mid x_{1:n}
\sim
r_\phi(\,\cdot\mid s_n,\log n).
\end{equation}
The corresponding density in the original parameterization is
\begin{equation}
q_\phi^{\mathrm{scale}}(\theta\mid x_{1:n},n)
=
r_\phi\!\left(
n^\alpha(\theta-c_n)\mid s_n,\log n
\right)n^{\alpha d}.
\label{eq:induced}
\end{equation}
Sample size therefore enters through the coordinate transformation and its Jacobian, as well as through the $\log n$ input to the network. The transformation imposes a leading contraction factor, while the network learns the center and residual distribution. Because a size-dependent residual can still change its own width, the explicit factor alone does not fix the fitted contraction rate.

\subsection{Center-estimated NPE}
\label{sec:ce}

With the rate imposed, the center $c_n$ must still extrapolate, and in the rescaled coordinates a center error $\delta_n$ becomes $n^\alpha\delta_n$ (Section~\ref{sec:anatomy}). CE-NPE therefore anchors the center at a problem-specific estimator $c_{\mathrm{stat}}(x_{1:n})$ and lets the network $\Delta_\phi(s_n,\log n)$ learn only a correction:
\begin{equation}
c_n
=
c_{\mathrm{stat}}(x_{1:n})
+
n^{-\alpha}\Delta_\phi(s_n,\log n).
\label{eq:ce-center}
\end{equation}
The factor $n^{-\alpha}$ expresses the learned correction in the same local scale as the posterior uncertainty; Proposition~\ref{prop:center-coordinates} (Appendix~\ref{app:center-theory}) shows that with this factor a bounded correction preserves the anchor's error order, which a correction in parameter units does not. Substituting this center into $z_n=n^\alpha(\theta-c_n)$ gives
\begin{equation}
z_n
=
n^\alpha\!\left[
\theta-c_{\mathrm{stat}}(x_{1:n})
\right]
-
\Delta_\phi(s_n,\log n).
\end{equation}

The center is only a coordinate anchor: it need not be a MAP estimate, require an evaluable likelihood, or approximate posterior uncertainty or shape. Its key requirement is to track posterior location at the contraction rate, with error $O_p(n^{-\alpha})$, so that the error remains $O(1)$ in rescaled coordinates (Section~\ref{sec:anatomy}). In our benchmarks, $c_{\mathrm{stat}}$ ranges from simple closed-form statistics such as $\bar x$ to EM or simulator-based moment matching (Appendix~\ref{app:benchmarks}).

\subsection{Residual density families}
\label{sec:residual-family}

The structured parameterization does not require the rescaled posterior to be Gaussian, and we consider two residual families. The first is a Gaussian conditional density,
\begin{equation}
r_\phi(z_n\mid s_n,\log n)
=
\mathcal{N}\!\left(
z_n;
\mu_\phi(s_n,\log n),
\Sigma_\phi(s_n,\log n)
\right),
\end{equation}
which provides a simple residual family and lets us assess how far Gaussianity suffices after centering and scaling. This is not Gaussian variational inference: mean and covariance are amortized functions of $(s_n,\log n)$, and in parameter space the density has center $c_n+n^{-\alpha}\mu_\phi$ and covariance $n^{-2\alpha}\Sigma_\phi$. The second family uses a neural spline flow (NSF) \citep{durkan2019neural}, which can represent
non-Gaussian residual shapes. The free heads and the CE residual receive $\log n$ as a size input, so the
conditional residual can account for finite-sample changes in shape and covariance, while the deterministic factor $n^{-\alpha}$ sets the contraction scale; the size inputs of the saved runs are documented in Appendix~\ref{app:size-context}.

\section{Why scaling alone is not enough}
\label{sec:anatomy}

The factor $n^{-\alpha}$ fixes only the reference contraction scale; accurate extrapolation also requires that posterior location, uncertainty and shape remain correct as $n$ grows.

\subsection{Keeping center error on the posterior scale}
Let $m_n$ be the true posterior mean and $c_n$ the center of the residual coordinates. If posterior width decreases as $n^{-\alpha}$, a location error $c_n-m_n$ becomes a shift $n^\alpha(c_n-m_n)$ on the posterior scale: the same absolute error becomes five times larger in posterior-standard-deviation units when the posterior becomes five times narrower (Figure~\ref{fig:amplify}). The residual decomposes as
\begin{equation}
\underbrace{n^\alpha(\theta-c_n)}_{\text{residual to be learned}}
=\underbrace{n^\alpha(\theta-m_n)}_{\text{variation around the true mean}}
+\underbrace{n^\alpha(m_n-c_n)}_{\text{shift caused by center error}},
\label{eq:decomp}
\end{equation}
Keeping this shift bounded requires the center error to decrease at least as fast as the posterior width. Otherwise, the residual network must learn a location shift that grows with sample size, even after the contraction scale has been removed.

\begin{figure}[h]
\centering
\begin{tikzpicture}[font=\scriptsize]
\begin{scope}
\draw[->] (0,0) -- (4.8,0);
\draw[dashed] (1.4,0) -- (1.4,1.55);
\node at (1.4,1.75) {true mean $m_n$};
\draw[thick] plot[domain=0.2:4.6,samples=60]
(\x,{1.3*exp(-((\x-2.3)^2)/1.28)});
\draw[<->] (1.4,0.45) -- (2.3,0.45);
\node at (3.4,0.55) {error $\approx1.1$ SD};
\node at (2.4,-0.4) {$n=N$};
\end{scope}
\begin{scope}[shift={(6.5,0)}]
\draw[->] (0,0) -- (4.8,0);
\draw[dashed] (1.4,0) -- (1.4,1.55);
\node at (1.4,1.75) {true mean $m_n$};
\draw[thick] plot[domain=1.75:2.85,samples=60]
(\x,{1.3*exp(-((\x-2.3)^2)/0.0512)});
\draw[<->] (1.4,0.45) -- (2.3,0.45);
\node at (3.4,0.55) {error $\approx5.6$ SD};
\node at (2.4,-0.4) {$n=25N$};
\end{scope}
\end{tikzpicture}
\caption{\textbf{The same location error becomes more serious as uncertainty shrinks.}}
\label{fig:amplify}
\end{figure}

CE-NPE uses $c_n=c_{\mathrm{stat}}+n^{-\alpha}\Delta_n$ to express the learned correction on this same scale (Equation~\ref{eq:ce-center}). Proposition~\ref{prop:center-coordinates} (Appendix~\ref{app:center-theory}) shows that, for a fixed trained network, if $c_{\mathrm{stat}}-m_n=O_p(n^{-\alpha})$ and $\Delta_n=O_p(1)$ over repeated datasets, then $c_n-m_n=O_p(n^{-\alpha})$. A bounded local correction therefore preserves the anchor's error order; a correction added directly in parameter units must itself shrink at rate $n^{-\alpha}$ or faster to provide the same guarantee. Neither condition is enforced by training, and a bounded local correction cannot remove a persistent anchor bias; the anchor itself must therefore contract.

\subsection{Separating location, covariance, and shape errors}
\label{sec:hierarchy}
Keeping the shift bounded is necessary but not sufficient: an error that remains bounded relative to posterior width need not be small. For example, two one-dimensional Gaussians with equal variances and means separated by one standard deviation have forward KL divergence $1/2$, even as both variances shrink.

Let $P_n$ denote the true posterior, $Q_n$ a fitted Gaussian posterior, and $G_n$ the Gaussian with the same mean and covariance as $P_n$. Under the density, entropy, and covariance conditions stated in Proposition~\ref{prop:finite-error} (Appendix~\ref{app:transfer}),
\begin{equation}
\mathrm{KL}(P_n\Vert Q_n)
=\underbrace{\mathrm{KL}(P_n\Vert G_n)}_{\text{non-Gaussian shape}}
+\underbrace{\mathrm{KL}(G_n\Vert Q_n)}_{\text{mean and covariance mismatch}}.
\label{eq:main-error}
\end{equation}
The first term measures the error that remains even for the best Gaussian approximation (Proposition~\ref{prop:finite-error}). The second measures errors in the fitted mean, marginal widths, and correlations. Because the learned residual can have a nonzero mean, the fitted mean need not equal $c_n$; controlling the center addresses only part of posterior accuracy.

For a fixed dataset, free conditioning, structural scaling, and statistical anchoring do not change the best achievable approximation within an unrestricted Gaussian family; they change how its parameters are learned and extrapolated. A spline residual can reduce the Gaussian shape error, but greater flexibility does not guarantee that training achieves this. The experiments therefore assess posterior-mean error, marginal width, and joint posterior agreement separately.

Both analyses hold at any fixed sample size; whether the fitted correction, covariance and residual shape remain accurate beyond the training range is an empirical question, which Section~\ref{sec:experiments} addresses by reporting fit at the training boundary and beyond it separately.

\section{Experiments}
\label{sec:experiments}

\subsection{Experimental setup}
\label{sec:protocol}

\paragraph{Estimators and protocol.}
We compare five estimators under matched encoder widths, optimization schedules, and training simulation budgets: \textbf{Free-G} and \textbf{Free-NSF}, which learn the posterior through free conditioning; \textbf{Scale-G}, which imposes the contraction scale but learns the center; and \textbf{CE-G} and \textbf{CE-NSF}, which combine the imposed scale with a statistical anchor and a learned local correction. The suffixes G and NSF denote Gaussian and spline density heads, respectively. Training sizes are sampled uniformly from $n\in\{5,\ldots,20\}$, and evaluation uses $n\in\{10,20,50,100\}$, with datasets paired across estimators and seeds. Out-of-range sizes are used only for evaluation. All regular benchmarks impose the regular rate $\alpha=1/2$. When no asymptotic argument supplies the rate, it can be estimated from how the anchor contracts across the training sizes; on our tasks this estimate serves as a check on the assumed rate rather than a substitute for it (Appendix~\ref{app:alpha-hat}). Each training step draws a fresh batch of $128$ simulated datasets for up to $4000$ steps, evaluation uses $400$ paired datasets per size, and reported values are means over three seeds (five on the mixture). Architectural and implementation details are given in Appendix~\ref{app:details}.

\paragraph{Tasks.}
\label{sec:benchmark-design}
We use six adapted regular benchmarks from the \texttt{sbibm} suite \citep{lueckmann2021benchmarking}: Gaussian linear with Gaussian and uniform priors, Gaussian mixture, Bernoulli GLM, SIR, and Lotka--Volterra. Each dataset contains $n$ independent simulator replicates. The SIR and Lotka--Volterra priors are truncated to a regular regime in which the data remain informative about every parameter. Benchmark definitions, centers, and reference computations are given in Appendix~\ref{app:benchmarks}. Symmetric multimodal benchmarks and nonregular stress tests are reported in Appendix~\ref{app:boundary}.

\paragraph{Evaluation metrics.}
We assess posterior location, marginal width, and joint agreement with exact or numerically computed reference posteriors. Center error measures the fitted posterior-mean error in reference-standard-deviation units, averaged over coordinates; the SD ratio targets $1$. Joint $95\%$ HPD coverage \citep{hermans2022trust} targets $0.95$, and C2ST \citep{lopez2017revisiting} targets $0.5$. These complementary diagnostics distinguish improvements in location and contraction from remaining errors in joint posterior structure (Section~\ref{sec:hierarchy}). KL divergence, MMD \citep{gretton2012kernel} and simulation-based calibration \citep{talts2018validating} appear in the appendix studies that use them.

\begin{figure}[t]
\centering
\includegraphics[width=0.85\linewidth]{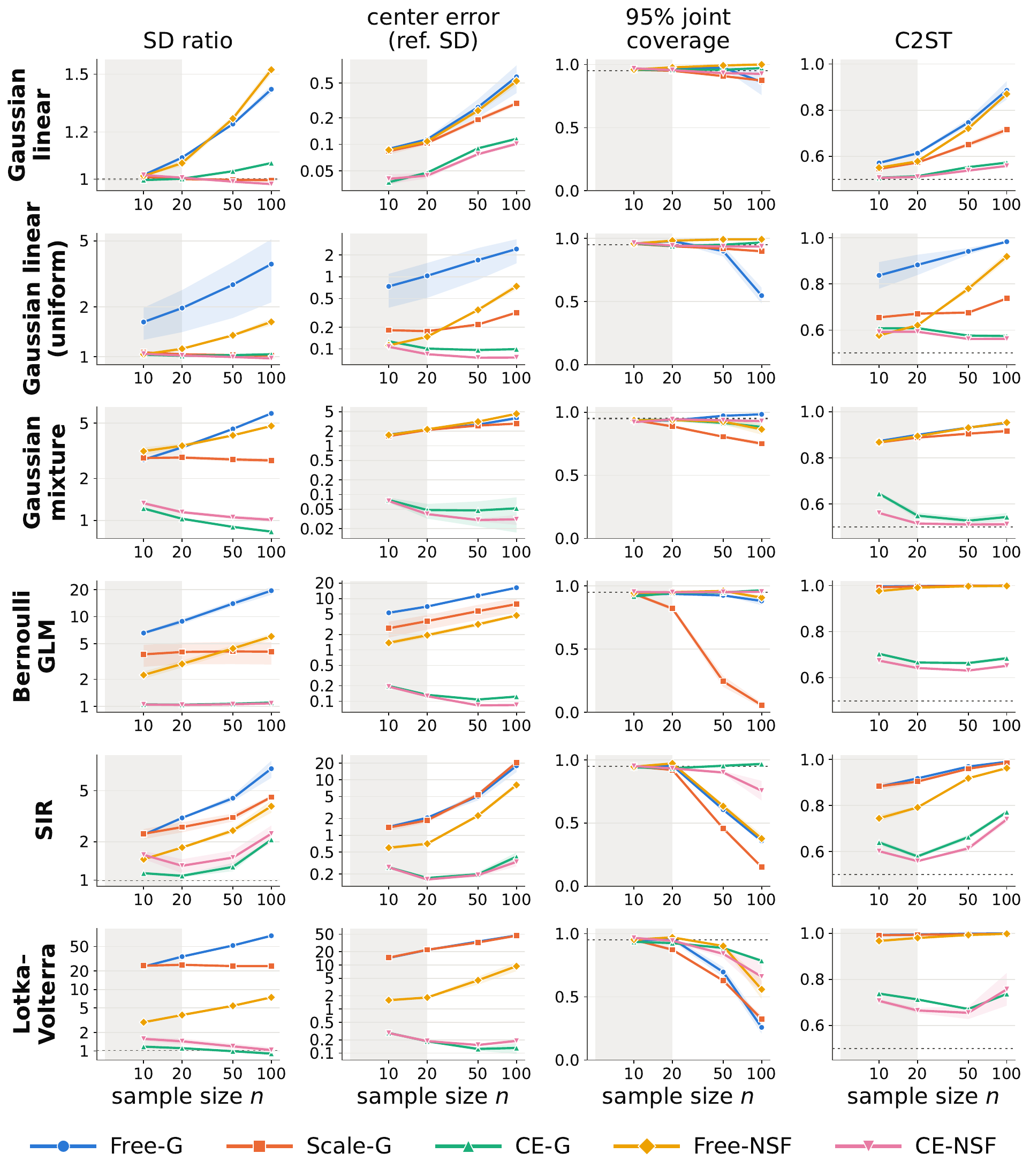}
\caption{\textbf{Results on the six regular benchmarks.} Seed means with $\pm1$ seed-SD bands. The shaded band is the training range; dotted lines mark the targets ($1$, $0.95$, and $0.5$). SD ratio and center error are on log axes. }
\label{fig:sbibm}
\end{figure}

\subsection{Does statistical structure improve sample-size extrapolation?}
\label{sec:gmm-results}

\paragraph{Overall performance.}
CE-G and CE-NSF achieve the lowest C2ST at both out-of-range sizes on all six regular benchmarks (Figure~\ref{fig:sbibm}; full results at every size in Appendix~\ref{app:regular}), and they also improve in-range agreement on most tasks. The in-range gain is expected: at $n$ between $5$ and $20$ the posterior is already far narrower than the prior on these benchmarks, so the anchor and the imposed scale supply the location and width that a free network must otherwise learn. The advantage does not, however, extend uniformly to calibration: at $n=100$, both CE estimators remain over-dispersed on SIR, where only CE-G maintains near-nominal joint coverage, and on Lotka--Volterra their marginal widths are close to the reference yet both under-cover.
\paragraph{Roles of scaling and centering.}
The intermediate ablation separates the two ingredients. Scale-G keeps its relative width approximately constant on five benchmarks and slows its growth on SIR, but because its center is still learned, its posterior-mean error in reference SD units increases with sample size on every benchmark. Anchoring the center removes this growth: CE-G improves both center error and C2ST over the Gaussian baselines at every out-of-range size. A fixed-center mixture control isolates the benefit of scaling more directly: with the same EM posterior mean, imposing the contraction factor substantially reduces C2ST and KL at $n=100$, although coverage moves farther from nominal (Appendix~\ref{app:matched}). This control also outperforms full CE-G in C2ST and KL, so on the mixture the gain comes from explicit scaling around an accurate center, and the learned location adjustment adds no further improvement. The same controls on SIR and Lotka--Volterra lead to the same conclusion: the fixed-center, imposed-scale control matches or slightly improves on CE-G at every size, and the in-range advantage of CE-NSF narrows or reverses at $5\times$ (Appendix~\ref{app:matched}).

\paragraph{Residual flexibility.}
The spline head adds a further, task-dependent benefit: CE-NSF improves on CE-G in width and C2ST on the mixture and matches it on the other benchmarks, whereas Free-NSF, which lacks the centered coordinates, remains inaccurate. Flexibility therefore helps once the coordinates are right, but does not substitute for them.

\subsection{Real data: mutation rate and growth from 1000 Genomes}
\label{sec:real}
\begin{figure}[b]
\centering
\includegraphics[width=\linewidth]{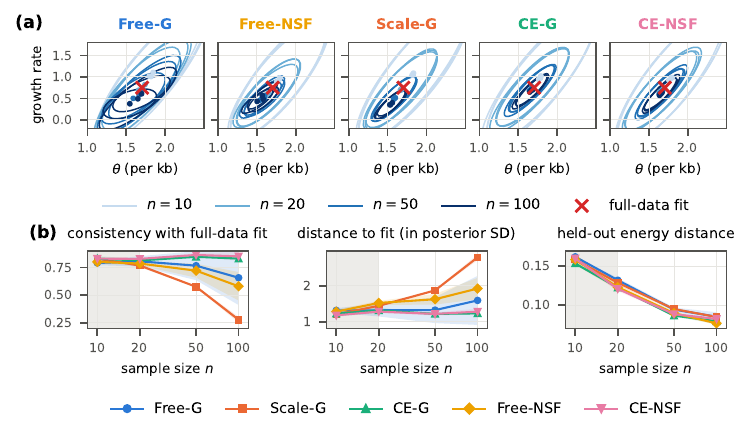}
\caption{\textbf{Posterior contraction on real 1000 Genomes windows.} (a) The typical posterior of each estimator as $n$ grows from $10$ to $100$, with the moment fit to all windows as the cross. (b) Against $n$: \emph{consistency}, the fraction of datasets whose $95\%$ intervals contain the full-data fit (higher is better); \emph{distance}, how far the fit lies from the posterior mean in posterior SDs (lower is better); \emph{energy distance} between posterior-predictive and held-out real windows (lower is better; independent of the anchor). Tables in Appendix~\ref{app:popgen}.}
\label{fig:popgen}
\end{figure}

The regular benchmarks all have reference posteriors; population genetics provides a real-data application without one. Under a neutral coalescent model with growth, we infer the per-kb mutation rate and growth rate from genomic-window summaries, using a simulator-based moment estimator as the statistical anchor. The data are summaries of non-overlapping $10$\,kb windows on chromosome 22 of the 1000 Genomes phase-3 panel, and a dataset of size $n$ is a subsample of $n$ windows. The five estimators follow the main training-size protocol.

With no reference posterior, we use two complementary criteria (Figure~\ref{fig:popgen}): consistency with the moment fit to all windows, which shares the center's moment functional and therefore favors CE-NPE, and posterior-predictive energy distance to held-out windows, which is independent of that target.

As $n$ grows to five times the training maximum, the posteriors of CE-G and CE-NSF contract onto the full-data moment fit, whereas Scale-G and Free-NSF contract onto a lower growth rate and Free-G barely contracts (Figure~\ref{fig:popgen}a); accordingly, only the CE estimators keep their consistency with the fit while the others drift away from it (Figure~\ref{fig:popgen}b). This stability costs nothing in predictive accuracy: on the anchor-independent criterion, their held-out predictions match the best free baseline at every size. In a separate in-model evaluation, CE has the lowest SBC KS statistics, although its joint coverage remains below nominal (Appendix~\ref{app:popgen}).

\section{Discussion and limitations}
\label{sec:discussion}

\paragraph{Discussion.}
Taken together, the experiments support combining a statistical anchor with an explicit contraction scale: this structure reduces the sample-size dependence that a neural posterior estimator must learn, and it changes how posterior parameters are learned and extrapolated rather than enlarging the Gaussian approximation family (Section~\ref{sec:hierarchy}). The fixed-center controls clarify the source of these gains: with covariance still learned, anchoring at the statistic and imposing the scale match or improve on full CE-G in out-of-range C2ST on the mixture, SIR, and Lotka--Volterra (Appendix~\ref{app:matched}). CE-NPE is accordingly intended for datasets comprising independent simulator replicates conditional on the parameter, when a contraction rate is justified and an affordable statistical center tracks posterior location on that scale.

\paragraph{Alternative explanations.}
Because several free baselines already fit poorly within the training range, we ran three controls to check that the out-of-range gap is not an artifact of under-training, of the particular training maximum, or of the choice of comparator (Appendix~\ref{app:diagnostics}). First, best-effort free baselines, retrained on every benchmark with a tuned learning rate, three times the step budget and longer patience, narrow the in-range gap on four benchmarks but leave the out-of-range ordering unchanged on all six. Second, retraining every estimator on SIR with a four times larger training maximum preserves CE's C2ST advantage at $5N$, although absolute accuracy and coverage change. Third, the compositional comparator PF-NPSE, trained on the same number of simulated datasets, approaches CE-NSF on Gaussian linear but remains less accurate on the other regular benchmarks, with several failures already present in range. Neither training effort, training range nor composition therefore accounts for the gap.

\paragraph{Scope and failure modes.}
\label{sec:ablations}
\label{sec:stress}
\label{sec:boundary-results}
The nonregular prototypes probe the rate, centering, and residual assumptions (Appendix~\ref{app:boundary}). On T1, whose posterior contracts at rate $n^{-1}$ to a Gamma limit \citep{Bochkina2014}, imposing the regular exponent produces increasing width mismatch; correcting the exponent substantially improves agreement (Table~\ref{tab:b1-rate}). T2 instead exposes a learning failure at a boundary: neither fitted residual recovers the shape, although a Gaussian renormalized on the parameter support can represent the exact posterior. A persistent ridge or separated modes, as in T3 and T4, do not become a stable local posterior under contraction about a single center. The two symmetric \texttt{sbibm} benchmarks, SLCP and two moons, behave in the same way: every single-center estimator fails at every size, including within the training range (Appendix~\ref{app:boundary}); a flexible flow can represent such distributions but must learn their changing geometry in the rescaled coordinates.

\paragraph{Practical and empirical limitations.}
CE-NPE is only partially amortized: simulator-based anchors can require numerical optimization at each query, adding task-specific computation beyond the neural estimator (Appendix~\ref{app:benchmarks}). The experiments cover finite simulation budgets and extrapolation up to five times the training maximum; additional training on every benchmark and a second training maximum on SIR test this sensitivity (Appendix~\ref{app:diagnostics}). Better agreement than the tested baselines does not by itself ensure calibrated credible regions, as the regular-benchmark and in-model population-genetic results show (Appendices~\ref{app:regular} and~\ref{app:popgen}); calibration remains a separate concern in neural SBI \citep{hermans2022trust}.

\section{Conclusion}
Accepting arbitrary dataset sizes is architectural; extrapolating posterior uncertainty is statistical. Explicitly encoding a justified posterior contraction rate removes the leading scale-extrapolation burden, a contracting statistical center controls the amplified location error, and once these first-order effects are stabilized, residual density learning targets the remaining covariance and shape errors. On six regular benchmarks CE-NPE gives the best reference agreement of the tested amortized estimators at every out-of-range size, up to five times the largest training size, without retraining; on real genomic windows CE-NPE stays consistent with the full-data fit at the same relative sizes at no cost in held-out predictive accuracy. Where the rate or the center is not justified, as in the nonregular prototypes, the same analysis identifies which ingredient fails and points toward representations adapted to the posterior geometry.

\subsection*{AI use statement}

LLM-based assistants helped polish the language and assisted with experiment, analysis, and figure code. The authors verified all content and take responsibility for the manuscript, code, and results.

\subsection*{Reproducibility statement}

Task-specific protocols, reference calculations and per-case results are provided in the appendix and accompanying code (supplementary material). Every table can be regenerated from the included per-case files. Where an evaluation uses a subset of the paired cases, the appendix states the subset.

\bibliography{references}

@inproceedings{geffner2023compositional,
  title={Compositional Score Modeling for Simulation-Based Inference},
  author={Geffner, Tomas and Papamakarios, George and Mnih, Andriy},
  booktitle={Proceedings of the 40th International Conference on Machine Learning},
  pages={11098--11116}, year={2023}, volume={202}, series={PMLR}
}

@inproceedings{chen2025conditional,
  title={Conditional Diffusions for Amortized Neural Posterior Estimation},
  author={Chen, Tianyu and Bansal, Vansh and Scott, James G.},
  booktitle={Proceedings of the 28th International Conference on Artificial Intelligence and Statistics},
  pages={2377--2385}, year={2025}, volume={258}, series={PMLR}
}

@inproceedings{lueckmann2021benchmarking,
  title={Benchmarking Simulation-Based Inference},
  author={Lueckmann, Jan-Matthis and Boelts, Jan and Greenberg, David and Goncalves, Pedro and Macke, Jakob},
  booktitle={Proceedings of the 24th International Conference on Artificial Intelligence and Statistics},
  pages={343--351}, year={2021}, volume={130}, series={PMLR}
}

@article{cranmer2020frontier,
  title={The Frontier of Simulation-Based Inference},
  author={Cranmer, Kyle and Brehmer, Johann and Louppe, Gilles},
  journal={Proceedings of the National Academy of Sciences},
  volume={117}, number={48}, pages={30055--30062}, year={2020}
}

@inproceedings{greenberg2019automatic,
  title={Automatic Posterior Transformation for Likelihood-Free Inference},
  author={Greenberg, David and Nonnenmacher, Marcel and Macke, Jakob},
  booktitle={Proceedings of the 36th International Conference on Machine Learning},
  pages={2404--2414}, year={2019}, volume={97}, series={PMLR}
}

@article{radev2020bayesflow,
  title={BayesFlow: Learning Complex Stochastic Models with Invertible Neural Networks},
  author={Radev, Stefan T. and Mertens, Ulf K. and Voss, Andreas and Ardizzone, Lynton and K{\"o}the, Ullrich},
  journal={IEEE Transactions on Neural Networks and Learning Systems},
  year={2022}, volume={33}, number={4}, pages={1452--1466}
}

@inproceedings{zaheer2017deep,
  title={Deep Sets},
  author={Zaheer, Manzil and Kottur, Satwik and Ravanbakhsh, Siamak and Poczos, Barnabas and Salakhutdinov, Ruslan and Smola, Alexander},
  booktitle={Advances in Neural Information Processing Systems}, volume={30}, year={2017}
}

@article{talts2018validating,
  title={Validating Bayesian Inference Algorithms with Simulation-Based Calibration},
  author={Talts, Sean and Betancourt, Michael and Simpson, Daniel and Vehtari, Aki and Gelman, Andrew},
  journal={arXiv preprint arXiv:1804.06788}, year={2018}
}

@book{vandervaart1998asymptotic,
  title={Asymptotic Statistics}, author={van der Vaart, Aad W.},
  publisher={Cambridge University Press}, year={1998}
}

@inproceedings{papamakarios2016fast,
  title={Fast $\epsilon$-free Inference of Simulation Models with Bayesian Conditional Density Estimation},
  author={Papamakarios, George and Murray, Iain},
  booktitle={Advances in Neural Information Processing Systems}, volume={29}, year={2016}
}

@inproceedings{lueckmann2017flexible,
  title={Flexible Statistical Inference for Mechanistic Models of Neural Dynamics},
  author={Lueckmann, Jan-Matthis and Goncalves, Pedro J. and Bassetto, Giacomo and {\"O.}cal, Kaan and Nonnenmacher, Marcel and Macke, Jakob H.},
  booktitle={Advances in Neural Information Processing Systems}, volume={30}, year={2017}
}

@inproceedings{papamakarios2019sequential,
  title={Sequential Neural Likelihood: Fast Likelihood-free Inference with Autoregressive Flows},
  author={Papamakarios, George and Sterratt, David and Murray, Iain},
  booktitle={Proceedings of the 22nd International Conference on Artificial Intelligence and Statistics}, year={2019}
}

@inproceedings{hermans2020likelihoodfree,
  title={Likelihood-free MCMC with Amortized Approximate Ratio Estimators},
  author={Hermans, Joeri and Begy, Volodimir and Louppe, Gilles},
  booktitle={Proceedings of the 37th International Conference on Machine Learning}, year={2020}
}

@inproceedings{deistler2022truncated,
  title={Truncated Proposals for Scalable and Hassle-free Simulation-based Inference},
  author={Deistler, Michael and Goncalves, Pedro J. and Macke, Jakob H.},
  booktitle={Advances in Neural Information Processing Systems}, volume={35}, year={2022}
}

@inproceedings{durkan2019neural,
  title={Neural Spline Flows},
  author={Durkan, Conor and Bekasov, Artur and Murray, Iain and Papamakarios, George},
  booktitle={Advances in Neural Information Processing Systems}, volume={32}, year={2019}
}

@inproceedings{gloeckler2024allinone,
  title={All-in-one Simulation-based Inference},
  author={Gloeckler, Manuel and Deistler, Michael and Weilbach, Christian and Wood, Frank and Macke, Jakob H.},
  booktitle={Proceedings of the 41st International Conference on Machine Learning}, year={2024}
}

@inproceedings{lee2019set,
  title={Set Transformer: A Framework for Attention-based Permutation-Invariant Neural Networks},
  author={Lee, Juho and Lee, Yoonho and Kim, Jungtaek and Kosiorek, Adam and Choi, Seungjin and Teh, Yee Whye},
  booktitle={Proceedings of the 36th International Conference on Machine Learning}, year={2019}
}

@inproceedings{chan2018likelihood,
  title={A Likelihood-free Inference Framework for Population Genetic Data Using Exchangeable Neural Networks},
  author={Chan, Jeffrey and Perrone, Valerio and Spence, Jeffrey P. and Jenkins, Paul A. and Mathieson, Sara and Song, Yun S.},
  booktitle={Advances in Neural Information Processing Systems}, volume={31}, year={2018}
}

@article{beaumont2002approximate,
  title={Approximate Bayesian Computation in Population Genetics},
  author={Beaumont, Mark A. and Zhang, Wenyang and Balding, David J.},
  journal={Genetics}, volume={162}, number={4}, pages={2025--2035}, year={2002}
}

@article{fearnhead2012constructing,
  title={Constructing Summary Statistics for Approximate Bayesian Computation: Semi-automatic Approximate Bayesian Computation},
  author={Fearnhead, Paul and Prangle, Dennis},
  journal={Journal of the Royal Statistical Society: Series B}, volume={74}, number={3}, pages={419--474}, year={2012}
}

@article{hermans2022trust,
  title={A Crisis in Simulation-Based Inference? Beware, Your Posterior Approximations Can Be Unfaithful},
  author={Hermans, Joeri and Delaunoy, Arnaud and Rozet, Fran{\c{c}}ois and Wehenkel, Antoine and Begy, Volodimir and Louppe, Gilles},
  journal={Transactions on Machine Learning Research}, year={2022}
}

@inproceedings{ward2022robust,
  title={Robust Neural Posterior Estimation and Statistical Model Criticism},
  author={Ward, Daniel and Cannon, Patrick and Beaumont, Mark and Fasiolo, Matteo and Schmon, Sebastian M.},
  booktitle={Advances in Neural Information Processing Systems}, volume={35}, year={2022}
}

@inproceedings{lopez2017revisiting,
  title     = {Revisiting Classifier Two-Sample Tests},
  author    = {Lopez-Paz, David and Oquab, Maxime},
  booktitle = {International Conference on Learning Representations},
  year      = {2017}
}

@article{gretton2012kernel,
  title={A kernel two-sample test},
  author={Gretton, Arthur and Borgwardt, Karsten M. and Rasch, Malte J. and Sch{\"o}lkopf, Bernhard and Smola, Alexander},
  journal={Journal of Machine Learning Research},
  volume={13},
  pages={723--773},
  year={2012}
}

@article{frazier2024statistical,
  title={The Statistical Accuracy of Neural Posterior and Likelihood Estimation},
  author={Frazier, David T. and Kelly, Ryan and Drovandi, Christopher and Warne, David J.},
  journal={arXiv preprint arXiv:2411.12068},
  year={2024},
  url={https://arxiv.org/abs/2411.12068}
}

@article{maceda2026variational,
  title={Demonstrating the power and flexibility of variational assumptions for amortized neural posterior estimation in environmental applications},
  author={Maceda, Elliot and Hector, Emily C. and Lenzi, Amanda and Reich, Brian J.},
  journal={Environmetrics},
  volume={37}, number={1}, pages={e70074},
  year={2026},
  doi={10.1002/env.70074}
}

@article{linhart2026diffusion,
  title={Diffusion posterior sampling for simulation-based inference in tall data settings},
  author={Linhart, Julia and Cardoso, Gabriel Victorino and Gramfort, Alexandre and Le Corff, Sylvain and Rodrigues, Pedro L. C.},
  journal={Transactions on Machine Learning Research},
  year={2026},
  url={https://arxiv.org/abs/2404.07593}
}

@inproceedings{durkan2020contrastive,
  title={On contrastive learning for likelihood-free inference},
  author={Durkan, Conor and Murray, Iain and Papamakarios, George},
  booktitle={International Conference on Machine Learning},
  pages={2771--2781},
  year={2020},
  organization={PMLR}
}

@article{goncalves2020training,
  title={Training deep neural density estimators to identify mechanistic models of neural dynamics},
  author={Gon{\c{c}}alves, Pedro J. and Lueckmann, Jan-Matthis and Deistler, Michael and Nonnenmacher, Marcel and {\"O}cal, Kaan and Bassetto, Giacomo and Chintaluri, Chaitanya and Podlaski, William F. and Haddad, Sara A. and Vogels, Tim P. and Greenberg, David S. and Macke, Jakob H.},
  journal={eLife},
  volume={9},
  pages={e56261},
  year={2020}
}

@article{kermack1927contribution,
  title={A contribution to the mathematical theory of epidemics},
  author={Kermack, William Ogilvy and McKendrick, Anderson G.},
  journal={Proceedings of the Royal Society of London. Series A},
  volume={115},
  number={772},
  pages={700--721},
  year={1927}
}

@article{lotka1920analytical,
  title={Analytical note on certain rhythmic relations in organic systems},
  author={Lotka, Alfred J.},
  journal={Proceedings of the National Academy of Sciences},
  volume={6},
  number={7},
  pages={410--415},
  year={1920}
}

@article{sisson2007sequential,
  title={Sequential {M}onte {C}arlo without likelihoods},
  author={Sisson, Scott A. and Fan, Yanan and Tanaka, Mark M.},
  journal={Proceedings of the National Academy of Sciences},
  volume={104},
  number={6},
  pages={1760--1765},
  year={2007}
}

@article{beaumont2009adaptive,
  title={Adaptive approximate {B}ayesian computation},
  author={Beaumont, Mark A. and Cornuet, Jean-Marie and Marin, Jean-Michel and Robert, Christian P.},
  journal={Biometrika},
  volume={96},
  number={4},
  pages={983--990},
  year={2009}
}

@article{kingman1982coalescent,
  title={The coalescent},
  author={Kingman, John Frank Charles},
  journal={Stochastic Processes and their Applications},
  volume={13},
  number={3},
  pages={235--248},
  year={1982}
}

@article{watterson1975number,
  title={On the number of segregating sites in genetical models without recombination},
  author={Watterson, G. A.},
  journal={Theoretical Population Biology},
  volume={7},
  number={2},
  pages={256--276},
  year={1975}
}

@article{auton2015global,
  title={A global reference for human genetic variation},
  author={{The 1000 Genomes Project Consortium}},
  journal={Nature},
  volume={526},
  number={7571},
  pages={68--74},
  year={2015}
}

@article{Bochkina2014,
author = {Natalia A. Bochkina and Peter J. Green},
title = {{The Bernstein--von Mises theorem and nonregular models}},
volume = {42},
journal = {The Annals of Statistics},
number = {5},
publisher = {Institute of Mathematical Statistics},
pages = {1850 -- 1878},
year = {2014},
doi = {10.1214/14-AOS1239},
URL = {https://doi.org/10.1214/14-AOS1239}
}

@inproceedings{kingma2015adam,
  title={Adam: A Method for Stochastic Optimization},
  author={Kingma, Diederik P. and Ba, Jimmy},
  booktitle={International Conference on Learning Representations},
  year={2015}
}

@article{elfwing2017sigmoid,
  title={Sigmoid-Weighted Linear Units for Neural Network Function Approximation in Reinforcement Learning},
  author={Elfwing, Stefan and Uchibe, Eiji and Doya, Kenji},
  journal={arXiv preprint arXiv:1702.03118},
  year={2017},
  url={https://arxiv.org/abs/1702.03118}
}

@inproceedings{loshchilov2017sgdr,
  title={{SGDR}: Stochastic Gradient Descent with Warm Restarts},
  author={Loshchilov, Ilya and Hutter, Frank},
  booktitle={International Conference on Learning Representations},
  year={2017},
  url={https://arxiv.org/abs/1608.03983}
}

@article{szekely2013energy,
  title={Energy statistics: A class of statistics based on distances},
  author={Sz{\'e}kely, G{\'a}bor J. and Rizzo, Maria L.},
  journal={Journal of Statistical Planning and Inference},
  volume={143}, number={8}, pages={1249--1272},
  year={2013},
  doi={10.1016/j.jspi.2013.03.018}
}

@article{habermann2025amortized,
  title={Amortized Bayesian Multilevel Models},
  author={Habermann, Daniel and Schmitt, Marvin and K{\"u}hmichel, Lars and Bulling, Andreas and Radev, Stefan T. and B{\"u}rkner, Paul-Christian},
  journal={Bayesian Analysis},
  year={2025},
  note={Advance publication, doi:10.1214/25-BA1570}
}

@article{mittal2025amortized,
  title={Amortized In-Context Bayesian Posterior Estimation},
  author={Mittal, Sarthak and Bracher, Niels Leif and Lajoie, Guillaume and Jaini, Priyank and Brubaker, Marcus},
  journal={arXiv preprint arXiv:2502.06601},
  year={2025}
}

@inproceedings{sun2026amortized,
  title={Amortized Simulation-Based Inference in Generalized Bayes via Neural Posterior Estimation},
  author={Sun, Shiyi and Nicholls, Geoff K. and Lee, Jeong Eun},
  booktitle={Proceedings of the 43rd International Conference on Machine Learning},
  year={2026}
}
\bibliographystyle{iclr2027_conference}

\clearpage
\appendix
\raggedbottom
\begin{center}{\Large\bfseries Appendix}\end{center}
\vspace{0.5em}
\noindent\textbf{Roadmap.}  Appendix~\ref{app:details} gives the architecture and training protocol; Appendix~\ref{app:center-theory} the statements and proofs behind Section~\ref{sec:anatomy}; Appendix~\ref{app:benchmarks} the benchmark and reference definitions; Appendix~\ref{app:regular} the six regular benchmarks at every size; Appendix~\ref{app:boundary} the nonregular stress tests; Appendix~\ref{app:diagnostics} the controls (best-effort baselines, a second training maximum, fixed-center, exponent estimation, size context, a compositional comparator); Appendix~\ref{app:popgen} the real-data application.
\vspace{0.5em}
\section{Implementation and training protocol}
\label{app:details}

The five estimators share the encoder, the optimization protocol and
the training budget; they differ only in the density head and in how
the center is obtained (Section~\ref{sec:methods}).

\subsection{Architecture}
All five estimators use the same Deep Sets encoder \citep{zaheer2017deep}:
\begin{itemize}
\item \textbf{Per-observation MLP:} $x_i\in\mathbb R^{d_x}\mapsto h_i\in\mathbb R^{64}$,
with widths $d_x\to64\to64\to64$.
\item \textbf{Mean pooling:} $\bar h=n^{-1}\sum_{i=1}^n h_i$.
\item \textbf{Aggregation MLP:} $[\bar h,t_n]\mapsto s_n\in\mathbb R^{64}$,
with widths $(64+d_t)\to64\to64$.
\end{itemize}
Here $t_n\in\mathbb R^{d_t}$ contains the sample mean, log SD, skewness
and excess kurtosis of each coordinate, plus task-specific summaries
for SIR and Lotka--Volterra (Appendix~\ref{app:benchmarks}).

The density heads and center networks are:
\begin{itemize}
\item \textbf{Gaussian head:} MLP with hidden widths $(64,64)$;
outputs $\mu\in\mathbb R^d$ and a lower-triangular $L$, with
$\Sigma=LL^\top$ and $L_{jj}=\exp(\operatorname{clip}(a_j,-7,5))$.
\item \textbf{NSF head} \citep{durkan2019neural}: 5 transforms, 12 bins, conditioner hidden widths $(64,64)$.
\item \textbf{Center/correction:} MLP with hidden widths $(64,64)$ and $d$ outputs.
\end{itemize}
All MLPs use SiLU \citep{elfwing2017sigmoid} between linear layers and no output activation.

\textbf{Parameter coordinates.} Free-G uses $\theta$; Free-NSF uses
$(\theta-\mu_{\mathrm{prior}})/\sigma_{\mathrm{prior}}$, with coordinate-wise
moments estimated from 20,000 prior draws. Scale-G, CE-G and CE-NSF use
$z_n=n^\alpha(\theta-c_n)$ (Equation~\ref{eq:residual}) without further standardization.

\subsection{Size inputs}
\label{app:size-context}
We use $\log n$ to express neural size conditioning in the main-text
parameterization. Sample size also enters through the explicit
contraction factor $n^{-\alpha}$ and the statistical anchor computed
from the observed dataset. The regular benchmarks use $\alpha=1/2$.

Appendix~\ref{app:residual-context} compares $\log n$, $1/n$,
and no size input to the residual network while holding the structured center and
contraction factor fixed. On this Gamma diagnostic, the three choices
give nearly identical contraction exponents and similar width and
calibration diagnostics, with small differences in KL. This comparison
concerns residual context; it does not establish equivalence of all
size encodings or network components.

\subsection{Training and evaluation}
\begin{itemize}
\item Training: up to 4000 steps, each on a fresh batch of 128 simulated
datasets, $n$ drawn uniformly from $\{5,\dots,20\}$; Adam
\citep{kingma2015adam} with learning rate $10^{-3}$, cosine-annealed to
zero over the run without restarts \citep{loshchilov2017sgdr}, and gradient norm clipped at $5$. The objective is
the negative log density of the simulated parameter under the estimator
(Section~\ref{sec:methods}).
\item Validation: 192 fixed cases at each $n\in\{5,10,15,20\}$ every
200 steps. Validation NLL is averaged across these sizes; the best
checkpoint is retained. An improvement must exceed $10^{-4}$, and
training stops after 20 validation checks without such an improvement.
\item Evaluation: 400 paired cases at $n\in\{10,20,50,100\}$, identical
across methods and seeds; 256 posterior draws for calibration
metrics, 1024 for reference metrics (Section~\ref{sec:protocol}).
\item Out-of-range sizes are never used for early stopping,
hyperparameter selection or post-hoc calibration.
\item Seeds: three on the regular benchmarks, except the mixture,
which uses five for C2ST and coverage and two for the saved width and
center-error summaries; the prototypes use ten. Values are means across
seeds, with sample SDs where shown; table captions state exceptions.
\end{itemize}
The prototypes of Appendix~\ref{app:boundary} use the same training
range and evaluation grid with a smaller network and their own budget,
stated there. Two protocol variations are reported separately:
best-effort free baselines (learning-rate sweep, three times the
budget; Appendix~\ref{app:free-best}) and a second training maximum
$N=80$ on SIR (training $n\in\{20,\dots,80\}$, evaluation
$n\in\{40,80,200,400\}$; Appendix~\ref{app:sir-n80}).

\clearpage
\section{Theoretical support}
\label{app:center-theory}

This appendix collects the statements and proofs behind Section~\ref{sec:anatomy}.
The first part concerns the center: what a contracting anchor means, why the
learned correction is multiplied by the contraction factor, and how center and
rate errors change with sample size. The second part concerns the fitted
posterior as a whole: an exact decomposition of its error into shape, mean and
covariance terms. Throughout, the network is fixed after training.

\subsection{Notation and the meaning of a contracting center}
Let $P_n$ be the true posterior for a dataset of size $n$, with mean
$m_n$ and covariance $n^{-2\alpha}\Sigma_n$, where
$\Sigma_n\to\Sigma\succ0$. The exponent $\alpha>0$ specifies the
contraction rate; $\Sigma_n$ specifies the remaining covariance.
All probability statements concern repeated datasets at a fixed
generating parameter $\theta_0$ after the network has been trained.

We distinguish three locations. The statistical anchor is
$c_{\mathrm{stat}}(x_{1:n})$. The residual coordinate is centered at
$c_n=c_{\mathrm{stat}}+n^{-\alpha}\Delta_n$, where $\Delta_n$ is the
network's local correction. The mean of the final fitted posterior is
$a_n$. If its residual has mean $\mu_n$, then
\begin{equation}
a_n=c_{\mathrm{stat}}+n^{-\alpha}(\Delta_n+\mu_n).
\label{eq:full-ce-mean}
\end{equation}
Thus, the anchor and the fitted posterior mean need not coincide.
Only the sum of the two learned location terms affects $a_n$; an
implementation may combine them into a single output.

The condition $n^\alpha(c_{\mathrm{stat}}-m_n)=O_p(1)$ means that the
anchor's error remains bounded in probability after multiplying by
$n^\alpha$. Because $\Sigma_n$ has a positive-definite limit, this is
equivalent to bounded error in posterior-standard-deviation units.
It does not require that the standardized error tend to zero.

\subsection{Why multiply the correction by the contraction factor?}
\begin{proposition}[Scaling of center errors]
\label{prop:center-coordinates}
Suppose $n^\alpha(c_{\mathrm{stat}}-m_n)=O_p(1)$ and the fitted local
correction satisfies $\Delta_n=O_p(1)$. Then
\[
c_n=c_{\mathrm{stat}}+n^{-\alpha}\Delta_n
\quad\Longrightarrow\quad
n^\alpha(c_n-m_n)=O_p(1).
\]
If instead the correction is applied in parameter units, then
\[
c_n=c_{\mathrm{stat}}+\Delta_n
\quad\Longrightarrow\quad
n^\alpha(c_n-m_n)=n^\alpha\Delta_n+O_p(1).
\]
In particular, a correction converging to a nonzero constant makes the
standardized center error diverge. A parameter-unit correction of order
$O_p(n^{-\alpha})$ retains bounded error.
\end{proposition}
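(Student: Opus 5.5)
The plan is to decompose the standardized center error algebraically and then use elementary closure properties of stochastic boundedness. Throughout, probability statements are over repeated datasets at fixed $\theta_0$, with the trained network fixed, as stipulated in the notation paragraph.

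\textbf{First claim.} Substituting $c_n=c_{\mathrm{stat}}+n^{-\alpha}\Delta_n$ and multiplying by $n^\alpha$ gives the exact identity
\[
n^\alpha(c_n-m_n)=n^\alpha(c_{\mathrm{stat}}-m_n)+\Delta_n .
\]
Both summands are $O_p(1)$ by hypothesis. A sum of two $O_p(1)$ sequences is $O_p(1)$. To see this, fix $\varepsilon>0$ and choose $M_1,M_2$ and $N$ so that, for $n\ge N$, $\Pr(\|n^\alpha(c_{\mathrm{stat}}-m_n)\|>M_1)<\varepsilon/2$ and $\Pr(\|\Delta_n\|>M_2)<\varepsilon/2$. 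The triangle inequality and a union bound then give
\[
\Pr\bigl(\|n^\alpha(c_n-m_n)\|>M_1+M_2\bigr)<\varepsilon .
\]
No independence between the anchor and the correction is needed, which matters because both are functions of the same dataset.

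\textbf{Second claim.} With $c_n=c_{\mathrm{stat}}+\Delta_n$, the same substitution gives
\[
n^\alpha(c_n-m_n)=n^\alpha\Delta_n+n^\alpha(c_{\mathrm{stat}}-m_n)=n^\alpha\Delta_n+O_p(1).
\]
This is exact up to naming the second term.

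\textbf{Consequences.} If $\Delta_n\to\delta\neq0$ in probability, then with probability tending to one $\|\Delta_n\|>\|\delta\|/2$. On that event, by the reverse triangle inequality,
\[
\|n^\alpha(c_n-m_n)\|\ge n^\alpha\|\delta\|/2-\|O_p(1)\| .
\]
This lower bound diverges because $n^\alpha\to\infty$ for $\alpha>0$, so for every $M$ the probability of exceeding $M$ tends to one. If instead $\Delta_n=O_p(n^{-\alpha})$, then $n^\alpha\Delta_n=O_p(1)$, and the sum argument from the first claim applies.

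\textbf{Main obstacle.} The only step needing care is the divergence statement. One must make precise what "converging to a nonzero constant" means; I will take it as convergence in probability. The divergence then has to be stated as $\Pr(\|n^\alpha(c_n-m_n)\|>M)\to1$ for every $M$, not as a deterministic limit. The reverse-triangle bound above handles this, with the $O_p(1)$ remainder controlled by choosing its bound $M'$ at level $\varepsilon$.
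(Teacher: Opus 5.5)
Your proposal is correct and follows the paper's proof: subtract $m_n$, multiply by $n^\alpha$, and compare $n^\alpha(c_{\mathrm{stat}}-m_n)+\Delta_n$ with $n^\alpha(c_{\mathrm{stat}}-m_n)+n^\alpha\Delta_n$. Your union-bound argument for the sum of two tight terms, and your reading of ``converging to a nonzero constant'' as convergence in probability with the reverse-triangle bound giving $\Pr(\|n^\alpha(c_n-m_n)\|>M)\to1$, spell out steps that the paper's one-line proof leaves implicit.
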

\begin{proof}
Subtract $m_n$ and multiply each expression by $n^\alpha$.
For the local correction the result is
$n^\alpha(c_{\mathrm{stat}}-m_n)+\Delta_n$, a sum of two tight terms.
For the parameter-unit correction the second term is
$n^\alpha\Delta_n$, giving the stated alternatives.
\end{proof}

This is a sufficient-condition statement about the correction, not
proof that a particular training procedure meets it. The result also
requires a contracting anchor. If the anchor retains a fixed bias,
a bounded local correction becomes too small in parameter units to
cancel it. For the final mean $a_n$, the same argument applies to
$\Delta_n+\mu_n$ in Equation~\ref{eq:full-ce-mean}.

\subsection{How far can center and rate errors grow?}
Set $N=\nmax$ and write a deployment size as $n=fN$, with $f\ge1$.
The following calculation assumes specified power laws over the range
of interest. The width $\sigma_n$ is a scalar reference scale; it may
describe one parameter direction. It is not a substitute for checking
the full covariance in a multidimensional problem.

\begin{proposition}[Center and rate sensitivity]
\label{prop:horizon}
Suppose $\|a_{fN}-m_{fN}\|\le e_N f^{-\gamma}$ and
$\sigma_{fN}=\sigma_N f^{-\alpha}$. Then
\begin{equation}
\frac{\|a_{fN}-m_{fN}\|}{\sigma_{fN}}
\le\frac{e_N}{\sigma_N}f^{\alpha-\gamma}.
\label{eq:center-sensitivity}
\end{equation}
If the center-error law is an equality with $e_N>0$ and
$\alpha>\gamma$, a tolerance $\tau\ge e_N/\sigma_N$ is reached at
\[
f_{\mathrm{center}}=(\tau\sigma_N/e_N)^{1/(\alpha-\gamma)}.
\]
Separately, suppose the fitted width is
$\widehat\sigma_{fN}=\sigma_N f^{-\widehat\alpha}$, so that it matches
the reference at $N$. With $\eta=\widehat\alpha-\alpha$,
\begin{equation}
\frac{\widehat\sigma_{fN}}{\sigma_{fN}}=f^{-\eta},
\qquad
\left|\log\frac{\widehat\sigma_{fN}}{\sigma_{fN}}\right|\le\kappa
\ \Longleftrightarrow\ |\eta|\log f\le\kappa.
\label{eq:rate-sensitivity}
\end{equation}
\end{proposition}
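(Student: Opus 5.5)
The plan is to verify both parts of Proposition~\ref{prop:horizon} by direct substitution into the assumed power laws. There are three steps: an inequality, an equality that can be inverted for the center horizon, and an equivalence under a monotone transformation for the rate.

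\textbf{Center sensitivity.} Divide the assumed bound $\|a_{fN}-m_{fN}\|\le e_N f^{-\gamma}$ by the positive quantity $\sigma_{fN}=\sigma_N f^{-\alpha}$. This gives $\|a_{fN}-m_{fN}\|/\sigma_{fN}\le (e_N/\sigma_N) f^{-\gamma+\alpha}$, which is Equation~\ref{eq:center-sensitivity}. Positivity of $\sigma_N$ is implicit in its role as a width, and this is the only thing needed for the division to preserve the inequality.

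\textbf{Center horizon.} Now assume the center-error law holds with equality and $e_N>0$. The standardized error is then $g(f)=(e_N/\sigma_N)f^{\alpha-\gamma}$. Because $\alpha>\gamma$, this function is continuous and strictly increasing in $f$ on $[1,\infty)$. Its value at $f=1$ is $e_N/\sigma_N\le\tau$, and it diverges as $f\to\infty$. So the level $\tau$ is attained at exactly one $f\ge1$. Solving $g(f)=\tau$ gives $f^{\alpha-\gamma}=\tau\sigma_N/e_N$. Taking the $1/(\alpha-\gamma)$ power yields $f_{\mathrm{center}}=(\tau\sigma_N/e_N)^{1/(\alpha-\gamma)}$. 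The hypothesis $\tau\ge e_N/\sigma_N$ is what guarantees $f_{\mathrm{center}}\ge1$, so the horizon lies in the deployment range.

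\textbf{Rate sensitivity.} Divide $\widehat\sigma_{fN}=\sigma_N f^{-\widehat\alpha}$ by $\sigma_{fN}=\sigma_N f^{-\alpha}$. The common factor $\sigma_N$ cancels, leaving $f^{-(\widehat\alpha-\alpha)}=f^{-\eta}$. Taking logarithms gives $\log(\widehat\sigma_{fN}/\sigma_{fN})=-\eta\log f$. Since $f\ge1$, we have $\log f\ge0$, so the absolute value is $|\eta|\log f$. The stated equivalence with $|\eta|\log f\le\kappa$ then holds identically.

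No step is a genuine obstacle, since each is elementary algebra. The only point needing care is the center-horizon claim. There the monotonicity argument relies on $\alpha>\gamma$ and $e_N>0$, and the condition $\tau\ge e_N/\sigma_N$ places the crossing at some $f\ge1$. I would state these explicitly so the formula is not read as valid when $\alpha\le\gamma$. In that case the standardized error is non-increasing and no finite horizon exists.
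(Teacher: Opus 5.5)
Your proposal is correct and follows the paper's own proof: divide the center-error bound by the reference width, solve the equality case for $f$ at tolerance $\tau$, and divide the two width power laws and take logarithms. Your added remarks, namely that $\tau\ge e_N/\sigma_N$ places $f_{\mathrm{center}}\ge 1$ and that $\log f\ge 0$ lets you write the absolute value as $|\eta|\log f$, make explicit details the paper leaves implicit.
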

\begin{proof}
Divide the center-error bound by the reference width. Under equality,
solve for $f$ at the specified tolerance. For the width statement,
divide the two power laws and take logarithms.
\end{proof}

Equation~\ref{eq:center-sensitivity} has three cases: the bound on
standardized error decreases if $\gamma>\alpha$, stays constant if
$\gamma=\alpha$, and grows if $\gamma<\alpha$. These describe the
error itself when the assumed center-error law is an equality; a
growing upper bound alone does not establish that the error grows.
With a fixed offset and root-$n$ contraction, the error grows as
$\sqrt f$ under equality. For example, increasing sample size from
$N$ to $25N$ multiplies the standardized error by five, even though
the absolute offset has not changed. The threshold
$f_{\mathrm{center}}$ is therefore conditional on the assumed power
laws, not an extrapolation range guaranteed by training. Here the
error concerns the final fitted mean $a_n$, including any residual
mean, rather than the coordinate anchor alone.
Equation~\ref{eq:rate-sensitivity} says that an exponent that is too
large produces a posterior that becomes too narrow, and conversely
for an exponent that is too small. The T1 prototype (Appendix~\ref{app:boundary}) is a direct test:
its posterior contracts at rate $n^{-1}$ while every estimator imposes
$n^{-1/2}$, so $\eta=-1/2$ and the predicted over-dispersion is
$\sqrt f$. The measured factors on T1 are $1.56$ and $2.21$ at $f=2.5$
and $f=5$, within $2\%$ of the prediction for both residual families.

The fitted-width assumption is essential. A size-dependent residual
can change its own width and partly undo an imposed exponent; the
coordinate transformation alone does not imply that the fitted width
follows the stated power law. These calculations also omit other
sources of posterior error. They explain sensitivity under specified
conditions, rather than establishing a deployment guarantee.

\subsection{An exact decomposition for a Gaussian approximation}
\label{app:transfer}
The true posterior
$P_n$ has mean $m_n$ and covariance $n^{-2\alpha}\Sigma_n$. The fitted
Gaussian $Q_n$ has mean $a_n$ and covariance $n^{-2\alpha}A_n$.
The rate factor is shared in this notation, but $A_n$ is allowed to
depend on $n$, so the notation does not assume correct fitted contraction.

\begin{proposition}[Exact error decomposition]
\label{prop:finite-error}
Suppose $P_n$ has a density with finite differential entropy and
positive-definite covariance, and $A_n\succ0$. Define
\[
G_n=\mathcal N(m_n,n^{-2\alpha}\Sigma_n),\qquad
J_n=\mathrm{KL}(P_n\Vert G_n),
\]
and set $h_n=n^\alpha(a_n-m_n)$ and
$E_n=A_n^{-1/2}\Sigma_n A_n^{-1/2}$. Then
\begin{equation}
\begin{split}
\mathrm{KL}(P_n\Vert Q_n)
={}&\underbrace{J_n}_{\text{Gaussian shape error}}
+\underbrace{\tfrac12 h_n^\top A_n^{-1}h_n}_{\text{mean error}}\\
&+\underbrace{\tfrac12\{\operatorname{tr}(E_n)-d-\log\det E_n\}}_
{\text{covariance error}}.
\end{split}
\label{eq:finite-error}
\end{equation}
All three terms are nonnegative. If the eigenvalues of $A_n$ and $E_n$
stay in fixed compact subsets of $(0,\infty)$, there are constants
$0<c<C<\infty$ such that
\[
c\{\|h_n\|^2+\|E_n-I\|_F^2\}
\le\mathrm{KL}(P_n\Vert Q_n)-J_n
\le C\{\|h_n\|^2+\|E_n-I\|_F^2\}.
\]
\end{proposition}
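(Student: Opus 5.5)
The plan is to use the fact that $\log q$ for a Gaussian $Q_n$ is a quadratic polynomial in $\theta$. Its expectation under $P_n$ therefore depends on $P_n$ only through the mean $m_n$ and covariance $n^{-2\alpha}\Sigma_n$, which $P_n$ shares with $G_n$.

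\textbf{Step 1: split the divergence.} Finite entropy and finite second moments make every integral below finite. We write
\[
\mathrm{KL}(P_n\Vert Q_n)=\E_{P_n}[\log p-\log g]+\E_{P_n}[\log g-\log q],
\]
where $p,g,q$ are the densities of $P_n,G_n,Q_n$. The first term is $J_n$ by definition. In the second term, $\log g-\log q$ is a quadratic in $\theta$, so $\E_{P_n}$ of it equals $\E_{G_n}$ of it. Hence the second term is exactly $\mathrm{KL}(G_n\Vert Q_n)$, which gives Equation~\ref{eq:main-error}.

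\textbf{Step 2: evaluate the Gaussian term.} Apply the closed-form KL between two Gaussians to $\mathcal N(m_n,n^{-2\alpha}\Sigma_n)$ and $\mathcal N(a_n,n^{-2\alpha}A_n)$. The factor $n^{-2\alpha}$ cancels in the trace and determinant terms. In the quadratic term it gives $n^{2\alpha}(a_n-m_n)^\top A_n^{-1}(a_n-m_n)=h_n^\top A_n^{-1}h_n$. Using $\operatorname{tr}(A_n^{-1}\Sigma_n)=\operatorname{tr}(E_n)$ and $\log\det(A_n^{-1}\Sigma_n)=\log\det E_n$ yields the three displayed terms.

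\textbf{Step 3: nonnegativity.} $J_n\ge0$ because it is a KL divergence. The mean term is nonnegative because $A_n\succ0$. For the covariance term, diagonalize $E_n\succ0$ with eigenvalues $\lambda_i$; the term equals $\tfrac12\sum_i(\lambda_i-1-\log\lambda_i)$, and each summand is nonnegative.

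\textbf{Step 4: two-sided bounds.} This is the main, though modest, obstacle, because the constants must be uniform in $n$.
\begin{itemize}
\item Mean term: if the eigenvalues of $A_n$ lie in $[a,b]\subset(0,\infty)$, then $\tfrac12 h_n^\top A_n^{-1}h_n$ lies between $\|h_n\|^2/(2b)$ and $\|h_n\|^2/(2a)$.
\item Covariance term: let $\phi(\lambda)=\lambda-1-\log\lambda$. On a compact interval $K\subset(0,\infty)$ containing $1$ and the eigenvalues of $E_n$, the ratio $\phi(\lambda)/(\lambda-1)^2$ extends continuously to $\lambda=1$ with value $1/2$ and is positive throughout. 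It is therefore bounded between constants $0<c'\le C'$ on $K$. Summing over eigenvalues and using $\|E_n-I\|_F^2=\sum_i(\lambda_i-1)^2$, valid since $E_n$ is symmetric, gives matching bounds.
\end{itemize}
Taking $c$ and $C$ as the minimum and maximum of the resulting constants completes the argument.
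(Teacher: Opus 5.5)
Your proposal is correct and follows the paper's proof essentially step for step. You add and subtract $\log g$, use that the Gaussian log-ratio is quadratic so that its $P_n$-expectation equals its $G_n$-expectation, apply the closed-form Gaussian KL, and bound the mean term and the eigenvalue sum of $\lambda-1-\log\lambda$ on compact sets; the only differences are cosmetic, namely that you bound $\phi(\lambda)/(\lambda-1)^2$ by continuity where the paper uses Taylor's theorem with $g''(\lambda)=\lambda^{-2}$, and that you spell out the integrability justifying the split of the expectation.
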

\begin{proof}
Add and subtract $\log g_n$ inside $\E_{P_n}\log(p_n/q_n)$.
The first term is $J_n$. The remaining expectation involves the log
ratio of two Gaussian densities, which is quadratic in $\theta$.
Since $P_n$ and $G_n$ have the same first two moments, that expectation
equals $\mathrm{KL}(G_n\Vert Q_n)$. The Gaussian KL formula gives
Equation~\ref{eq:finite-error}.

For the quadratic bounds, diagonalize $E_n$. Its covariance penalty
is the sum of $g(\lambda)=\lambda-1-\log\lambda$ over its eigenvalues.
Here $g(1)=g'(1)=0$ and $g''(\lambda)=\lambda^{-2}$. Taylor's theorem
on the stated compact interval bounds this sum above and below by
constant multiples of $\|E_n-I\|_F^2$. The eigenvalue bounds for $A_n$
give the corresponding bounds for the mean term.
\end{proof}

The first term is the error that remains even after choosing the best
Gaussian mean and covariance. The second measures displacement in units
of the fitted uncertainty. The third includes errors in widths,
correlations and covariance orientation. None of these is removed merely
by writing an explicit factor $n^{-2\alpha}$.

For CE-G, $a_n$ includes both learned location terms in
Equation~\ref{eq:full-ce-mean}; substituting the statistical anchor alone
would omit part of the error. If $P_n$ is itself Gaussian, $J_n=0$.
For a flow approximation, KL remains invariant under an invertible
coordinate change, but the Gaussian three-term formula does not apply
to its general density.

\paragraph{Bounded error can still be substantial.}
For $P_n=\mathcal N(0,1/n)$ and
$Q_n=\mathcal N(a/\sqrt n,1/n)$, the formula gives
$\mathrm{KL}(P_n\Vert Q_n)=a^2/2$ for every $n$.
Both distributions contract at the same rate, yet their relative
displacement stays fixed. Bounded standardized mean error therefore
does not imply vanishing posterior error or nominal coverage.

\paragraph{The moment-matched Gaussian is the best Gaussian.}
Taking $a_n=m_n$ and $A_n=\Sigma_n$ in Equation~\ref{eq:finite-error}
makes the mean and covariance terms vanish, and both are strictly
positive otherwise, so $G_n$ is the unique minimizer of
$\mathrm{KL}(P_n\Vert\,\cdot\,)$ over Gaussians and $J_n$ is the
Gaussian family's irreducible error. Because an unrestricted Gaussian
remains an unrestricted Gaussian after an invertible centering and
scaling, Free-G, Scale-G and CE-G share this best achievable
approximation at a fixed dataset; their fitted errors differ because
they learn and extrapolate the mean and covariance differently. Only a
more flexible residual can reduce $J_n$, and whether a trained spline
does so is an empirical question (Section~\ref{sec:gmm-results}).

\clearpage
\section{Benchmark and reference definitions}
\label{app:benchmarks}
\setlength{\LTleft}{0pt}\setlength{\LTright}{\fill}%
\setlength{\LTpre}{4pt}\setlength{\LTpost}{4pt}%

\subsection{Gaussian linear}

\begin{longtable}{@{}lp{0.7\linewidth}@{}}
\textbf{Prior} & $\theta\sim\cN(0,\,0.1\,I_{10})$ \\[6pt]
\textbf{Simulator} & $x_i\mid\theta\sim\cN(\theta,\,0.1\,I_{10})$ \\[6pt]
\textbf{Dimensions} & $\theta\in\R^{10}$, $x_i\in\R^{10}$ \\[6pt]
\textbf{Center} & sample mean $\bar x$ \\[6pt]
\textbf{Reference posterior} & conjugate Gaussian posterior \\[6pt]
\textbf{References} & \citet{lueckmann2021benchmarking} \\
\end{longtable}

\subsection{Gaussian linear (uniform)}

\begin{longtable}{@{}lp{0.7\linewidth}@{}}
\textbf{Prior} & $\theta\sim U[-1,1]^{10}$ \\[6pt]
\textbf{Simulator} & $x_i\mid\theta\sim\cN(\theta,\,0.1\,I_{10})$ \\[6pt]
\textbf{Dimensions} & $\theta\in\R^{10}$, $x_i\in\R^{10}$ \\[6pt]
\textbf{Center} & sample mean $\bar x$ \\[6pt]
\textbf{Reference posterior} & product of truncated normals \\[6pt]
\textbf{References} & \citet{lueckmann2021benchmarking} \\
\end{longtable}

\subsection{Gaussian mixture}

\begin{longtable}{@{}lp{0.7\linewidth}@{}}
\textbf{Prior} & $\theta\sim U[-10,10]^2$ \\[6pt]
\textbf{Simulator} & $x_i\mid\theta\sim\tfrac12\cN(\theta,I)+\tfrac12\cN(\theta,0.01\,I)$ \\[6pt]
\textbf{Dimensions} & $\theta\in\R^{2}$, $x_i\in\R^{2}$ \\[6pt]
\textbf{Center} & ten EM steps from the coordinate-wise median \\[6pt]
\textbf{Reference posterior} & grid integration, $201\times201$ points around the MAP \\[6pt]
\textbf{References} & \citet{sisson2007sequential,beaumont2009adaptive} \\
\end{longtable}

\subsection{Bernoulli GLM}
Inference of a ten-parameter generalized linear model with Bernoulli
observations and a Gaussian prior whose covariance encourages
smoothness by penalizing second-order differences in the filter; the
observation is the sufficient statistic of the GLM.

\begin{longtable}{@{}lp{0.7\linewidth}@{}}
\textbf{Prior} & $\theta=(\beta,f)$: $\beta\sim\cN(0,2)$,
$f\sim\cN\bigl(0,(F^\top F)^{-1}\bigr)$, with $F_{i,i-2}=1$,
$F_{i,i-1}=-2$, $F_{i,i}=1+\sqrt{(i-1)/9}$, $F_{i,j}=0$ otherwise,
$1\le i,j\le9$ \\[6pt]
\textbf{Simulator} &
\begin{tabular}[t]{@{}l@{}}
$x_i=(x_{i,1},\dots,x_{i,10})$, $x_{i,1}=\sum_{t=1}^{T}z_t$, \\[5pt]
$x_{i,2:10}=\frac{1}{x_{i,1}}Vz$,
$z_t\sim\mathrm{Bern}\bigl(\eta(v_t^\top f+\beta)\bigr)$, \\[5pt]
$\eta(\cdot)=\exp(\cdot)/(1+\exp(\cdot))$, \\[5pt]
frozen input between time bins $t-8$ and $t$: \\[5pt]
$v_t\sim\cN(0,I)$, $V=[v_1,\dots,v_T]$
\end{tabular} \\[6pt]
\textbf{Dimensions} & $\theta\in\R^{10}$, $x_i\in\R^{10}$ \\[6pt]
\textbf{Fixed parameters} & duration $T=100$ time bins \\[6pt]
\textbf{Center} & simulator method of moments: damped Gauss--Newton
steps matching $X^\top\sigma(X\theta)$ to the replicate mean \\[6pt]
\textbf{Reference posterior} & adaptive importance sampling from the
Newton MAP  \\[6pt]
\textbf{References} & \citet{lueckmann2017flexible,goncalves2020training} \\
\end{longtable}

\subsection{SLCP}
A task designed to have a simple likelihood and a complex posterior:
one replicate is four two-dimensional points from a Gaussian whose
mean and covariance are nonlinear functions of $\theta$, so the
posterior is a union of four mirror images.

\begin{longtable}{@{}lp{0.7\linewidth}@{}}
\textbf{Prior} & $\theta\sim U[-3,3]^5$ \\[6pt]
\textbf{Simulator} &
\begin{tabular}[t]{@{}l@{}}
$x_i=(x_{i,1},\dots,x_{i,4})$,
$x_{i,j}\sim\cN(m_\theta,S_\theta)$, where \\[5pt]
$m_\theta=(\theta_1,\theta_2)^\top$,
$S_\theta=\bigl(\begin{smallmatrix}s_1^2 & \rho s_1s_2\\
\rho s_1s_2 & s_2^2\end{smallmatrix}\bigr)$, \\[5pt]
$s_1=\theta_3^2$, $s_2=\theta_4^2$, $\rho=\tanh\theta_5$
\end{tabular} \\[6pt]
\textbf{Dimensions} & $\theta\in\R^{5}$, $x_i\in\R^{8}$ \\[6pt]
\textbf{Center} & moment estimator on the branch $\theta_3,\theta_4>0$ \\[6pt]
\textbf{Reference posterior} & importance sampling on a fundamental
domain \\[6pt]
\textbf{References} &
\citet{papamakarios2019sequential,greenberg2019automatic,hermans2020likelihoodfree,durkan2020contrastive} \\
\end{longtable}

\subsection{Two moons}

\begin{longtable}{@{}lp{0.7\linewidth}@{}}
\textbf{Prior} & $\theta\sim U[-1,1]^2$ \\[6pt]
\textbf{Simulator} & $x_i\mid\theta=
\begin{bmatrix}r\cos\alpha+0.25\\ r\sin\alpha\end{bmatrix}+
\begin{bmatrix}-|\theta_1+\theta_2|/\sqrt2\\
(-\theta_1+\theta_2)/\sqrt2\end{bmatrix}$, where
$\alpha\sim U(-\pi/2,\pi/2)$, $r\sim\cN(0.1,\,0.01^2)$ \\[6pt]
\textbf{Dimensions} & $\theta\in\R^{2}$, $x_i\in\R^{2}$ \\[6pt]
\textbf{Center} & moment estimator on the branch $\theta_1+\theta_2\ge0$ \\[6pt]
\textbf{Reference posterior} & grid integration on a fundamental
domain, expanded by the symmetry group \\[6pt]
\textbf{References} & \citet{greenberg2019automatic} \\
\end{longtable}

\subsection{SIR}
An epidemiological model of susceptible $S$, infectious $I$ and
recovered $R$ individuals; the task infers the contact rate $\beta$
and the recovery rate $\gamma$ from binomial samples of $I$ at ten
evenly spaced times.

\begin{longtable}{@{}lp{0.7\linewidth}@{}}
\textbf{Prior} &
\begin{tabular}[t]{@{}l@{}}
$\beta\sim\mathrm{LogNormal}(\log 0.4,\,0.5)$,
$\gamma\sim\mathrm{LogNormal}(\log 0.125,\,0.2)$, \\[5pt]
truncated to $R_0=\beta/\gamma\in[1.5,8]$ (about $12\%$ of prior mass \\[2pt]
removed; below $1.5$ no epidemic occurs within the horizon, above $8$ \\[2pt]
the peak precedes the first observation and $\beta$ is weakly identified). \\[2pt]
Inference is in $(\log\beta,\log\gamma)$.
\end{tabular} \\[6pt]
\textbf{Simulator} &
\begin{tabular}[t]{@{}l@{}}
$x_i=(x_{i,1},\dots,x_{i,10})$,
$x_{i,t}\sim\mathrm{Binomial}\bigl(1000,\,\frac{I(t)}{N}\bigr)$ \\[5pt]
stored as fractions, where $I$ is simulated from \\[5pt]
$\frac{dS}{dt}=-\beta\frac{SI}{N}$ \\[5pt]
$\frac{dI}{dt}=\beta\frac{SI}{N}-\gamma I$ \\[5pt]
$\frac{dR}{dt}=\gamma I$
\end{tabular} \\[6pt]
\textbf{Dimensions} & $\theta\in\R^{2}$, $x_i\in\R^{10}$ \\[6pt]
\textbf{Fixed parameters} & population $N=10^6$, duration $T=160$, initial conditions
$(S(0),I(0),R(0))=(N-1,1,0)$ \\[6pt]
\textbf{Center} & simulator method of moments: coarse grid, then three
damped Gauss--Newton steps matching the simulator's mean output to the
replicate mean \\[6pt]
\textbf{Reference posterior} & grid integration in whitened Laplace
coordinates; Gaussian to high accuracy, fitted contraction exponent
$0.51$ \\[6pt]
\textbf{References} & \citet{kermack1927contribution} \\
\end{longtable}

\subsection{Lotka--Volterra}
The classic model of two interacting species; the task infers four
interaction parameters from both populations observed at ten evenly
spaced times.

\begin{longtable}{@{}lp{0.7\linewidth}@{}}
\textbf{Prior} &
\begin{tabular}[t]{@{}l@{}}
$\alpha\sim\mathrm{LogNormal}(-0.125,\,0.5)$,
$\beta\sim\mathrm{LogNormal}(-3,\,0.5)$, \\[5pt]
$\gamma\sim\mathrm{LogNormal}(-0.125,\,0.5)$,
$\delta\sim\mathrm{LogNormal}(-3,\,0.5)$, \\[5pt]
truncated to parameters for which neither species falls below \\[2pt]
$e^{-5}$ individuals at any observation time (about $8\%$ of prior \\[2pt]
mass removed; an effectively extinct species carries no \\[2pt]
information about its rates). Inference is in log-rate coordinates.
\end{tabular} \\[6pt]
\textbf{Simulator} &
\begin{tabular}[t]{@{}l@{}}
$x_{i,1,t}\sim\mathrm{LogNormal}(\log X(t),\,0.1)$, \\[5pt]
$x_{i,2,t}\sim\mathrm{LogNormal}(\log Y(t),\,0.1)$ \\[5pt]
stored as log-counts, where $X,Y$ are simulated from \\[5pt]
$\frac{dX}{dt}=\alpha X-\beta XY$ \\[5pt]
$\frac{dY}{dt}=-\gamma Y+\delta XY$
\end{tabular} \\[6pt]
\textbf{Dimensions} & $\theta\in\R^{4}$, $x_i\in\R^{20}$ \\[6pt]
\textbf{Fixed parameters} & duration $T=20$ time units, initial conditions $(X(0),Y(0))=(30,1)$ \\[6pt]
\textbf{Center} & simulator method of moments: $13^4$ grid, then four
damped Gauss--Newton steps with multi-start \\[6pt]
\textbf{Reference posterior} & adaptive importance sampling with a
Student-$t$ proposal at the Gauss--Newton MAP; Gaussian to high
accuracy, fitted contraction exponent $0.50$ \\[6pt]
\textbf{References} & \citet{lotka1920analytical} \\
\end{longtable}

\paragraph{Simulator-based centers (SIR and Lotka--Volterra).}
Both centers are likelihood-free method-of-moments estimates computed per dataset from simulator runs only.  \emph{Objective:} the simulator's mean output (the ODE solution with observation noise off) is fitted to the replicate mean by weighted least squares, with weights $n$ over the replicates' sample variance per output coordinate (floored at $10^{-3}$ of its mean), plus the Gaussian prior on $\theta$ as a quadratic penalty; the likelihood is never evaluated.  \emph{Initialization:} a coarse grid over the prior box, one batched ODE solve for the whole grid ($41\times41$ points within $\pm3.5$ prior SDs for SIR; $13^4$ points for Lotka--Volterra), scored by the same objective.  \emph{Refinement:} damped Gauss--Newton with a central-difference Jacobian and step halving, three steps from the best grid point for SIR and four steps from each of the eight best grid points for Lotka--Volterra, keeping the best fit (the oscillatory mean map has local optima at the wrong phase).  The same routine is applied at every $n$; no out-of-range data, reference posterior or tuning enters the center.

The Gamma model is defined with the ablation that uses it (Appendix~\ref{app:diagnostics}).

\clearpage
\section{Full regular-benchmark results}
\label{app:regular}

This appendix reports the six regular \texttt{sbibm} benchmarks at every evaluation size, in-range sizes next to the out-of-range sizes, under the unified protocol of Appendix~\ref{app:details}.  The two symmetric multimodal benchmarks are in Appendix~\ref{app:boundary}.

\begin{table}[!htb]
\centering
\caption{\textbf{Results on the six regular benchmarks at every evaluation size} (three seeds, except the mixture: five for C2ST and coverage, two for SD ratio; training $n\in\{5,\dots,20\}$). The two symmetric multimodal benchmarks are in Table~\ref{tab:sbibm-symmetric}. Bold marks the best entry in each column and block (closest to the target where the metric has one).}
\label{tab:sbibm-full}
\setlength{\tabcolsep}{4pt}
\renewcommand{\arraystretch}{1.25}
\resizebox{\linewidth}{!}{\begin{tabular}{l rrrr rrrr rrrr}
\toprule
 & \multicolumn{4}{c}{C2ST} & \multicolumn{4}{c}{SD ratio} & \multicolumn{4}{c}{$95\%$ joint coverage} \\
\cmidrule(lr){2-5}\cmidrule(lr){6-9}\cmidrule(lr){10-13}
estimator & $n=10$ & $n=20$ & $n=50$ & $n=100$ & $n=10$ & $n=20$ & $n=50$ & $n=100$ & $n=10$ & $n=20$ & $n=50$ & $n=100$ \\
\midrule
\multicolumn{13}{l}{\textit{Gaussian linear}} \\
Free-G & 0.57 & 0.61 & 0.75 & 0.89 & 1.02 & 1.09 & 1.24 & 1.41 & \textbf{0.96} & 0.97 & 0.97 & 0.86 \\
Scale-G & 0.55 & 0.57 & 0.65 & 0.71 & 1.01 & \textbf{1.00} & \textbf{1.00} & \textbf{1.00} & \textbf{0.96} & \textbf{0.95} & 0.91 & 0.87 \\
CE-G & \textbf{0.51} & \textbf{0.51} & 0.55 & 0.57 & \textbf{1.00} & \textbf{1.00} & 1.03 & 1.06 & \textbf{0.96} & 0.96 & \textbf{0.96} & 0.97 \\
Free-NSF & 0.55 & 0.58 & 0.72 & 0.87 & 1.01 & 1.06 & 1.26 & 1.53 & \textbf{0.96} & 0.98 & 0.99 & 1.00 \\
CE-NSF & \textbf{0.51} & \textbf{0.51} & \textbf{0.54} & \textbf{0.56} & 1.02 & 1.01 & 0.99 & 0.98 & 0.97 & 0.96 & 0.93 & \textbf{0.93} \\
\addlinespace[4pt]
\multicolumn{13}{l}{\textit{Gaussian linear (uniform)}} \\
Free-G & 0.84 & 0.88 & 0.94 & 0.98 & 1.62 & 1.97 & 2.72 & 3.62 & \textbf{0.96} & 0.98 & 0.90 & 0.55 \\
Scale-G & 0.66 & 0.67 & 0.68 & 0.74 & 1.06 & 1.03 & 1.02 & \textbf{1.02} & \textbf{0.96} & \textbf{0.94} & 0.92 & 0.90 \\
CE-G & 0.61 & 0.61 & 0.58 & 0.57 & \textbf{1.03} & \textbf{1.01} & 1.02 & 1.04 & \textbf{0.96} & \textbf{0.94} & \textbf{0.95} & 0.97 \\
Free-NSF & \textbf{0.58} & 0.62 & 0.78 & 0.92 & \textbf{1.03} & 1.12 & 1.35 & 1.62 & \textbf{0.96} & 0.98 & 0.99 & 0.99 \\
CE-NSF & 0.59 & \textbf{0.59} & \textbf{0.56} & \textbf{0.56} & 1.04 & 1.02 & \textbf{1.00} & \textbf{0.98} & \textbf{0.96} & \textbf{0.94} & 0.94 & \textbf{0.94} \\
\addlinespace[4pt]
\multicolumn{13}{l}{\textit{Gaussian mixture}} \\
Free-G & 0.87 & 0.90 & 0.93 & 0.95 & 2.73 & 3.34 & 4.54 & 5.86 & 0.93 & 0.93 & 0.97 & 0.98 \\
Scale-G & 0.87 & 0.89 & 0.90 & 0.92 & 2.81 & 2.83 & 2.74 & 2.70 & \textbf{0.94} & 0.89 & 0.81 & 0.75 \\
CE-G & 0.64 & 0.55 & 0.53 & 0.54 & \textbf{1.22} & \textbf{1.03} & 0.90 & 0.83 & \textbf{0.94} & \textbf{0.94} & 0.92 & 0.88 \\
Free-NSF & 0.87 & 0.89 & 0.93 & 0.95 & 3.14 & 3.44 & 4.08 & 4.76 & \textbf{0.94} & \textbf{0.94} & 0.93 & 0.86 \\
CE-NSF & \textbf{0.56} & \textbf{0.52} & \textbf{0.51} & \textbf{0.51} & 1.33 & 1.15 & \textbf{1.05} & \textbf{1.01} & 0.92 & \textbf{0.94} & \textbf{0.94} & \textbf{0.93} \\
\addlinespace[4pt]
\multicolumn{13}{l}{\textit{Bernoulli GLM}} \\
Free-G & 1.00 & 1.00 & 1.00 & 1.00 & 6.56 & 8.88 & 13.98 & 19.49 & 0.93 & 0.94 & 0.93 & 0.88 \\
Scale-G & 0.99 & 1.00 & 1.00 & 1.00 & 3.80 & 4.04 & 4.10 & 4.07 & 0.94 & 0.82 & 0.24 & 0.06 \\
CE-G & 0.70 & 0.67 & 0.66 & 0.68 & \textbf{1.05} & \textbf{1.04} & 1.07 & 1.10 & 0.92 & 0.94 & \textbf{0.95} & 0.97 \\
Free-NSF & 0.98 & 0.99 & 1.00 & 1.00 & 2.24 & 2.97 & 4.42 & 6.02 & 0.94 & \textbf{0.95} & 0.96 & 0.91 \\
CE-NSF & \textbf{0.67} & \textbf{0.64} & \textbf{0.63} & \textbf{0.65} & \textbf{1.05} & \textbf{1.04} & \textbf{1.05} & \textbf{1.08} & \textbf{0.95} & \textbf{0.95} & \textbf{0.95} & \textbf{0.95} \\
\addlinespace[4pt]
\multicolumn{13}{l}{\textit{SIR}} \\
Free-G & 0.88 & 0.92 & 0.97 & 0.99 & 2.27 & 3.07 & 4.37 & 7.43 & 0.94 & \textbf{0.96} & 0.61 & 0.36 \\
Scale-G & 0.88 & 0.90 & 0.96 & 0.99 & 2.30 & 2.60 & 3.09 & 4.46 & \textbf{0.95} & 0.92 & 0.46 & 0.15 \\
CE-G & 0.64 & 0.58 & 0.66 & 0.77 & \textbf{1.13} & \textbf{1.08} & \textbf{1.27} & \textbf{2.07} & 0.94 & 0.93 & \textbf{0.95} & \textbf{0.97} \\
Free-NSF & 0.74 & 0.79 & 0.92 & 0.96 & 1.46 & 1.80 & 2.44 & 3.78 & \textbf{0.95} & 0.97 & 0.63 & 0.38 \\
CE-NSF & \textbf{0.60} & \textbf{0.56} & \textbf{0.61} & \textbf{0.74} & 1.58 & 1.30 & 1.50 & 2.31 & \textbf{0.95} & 0.93 & 0.90 & 0.76 \\
\addlinespace[4pt]
\multicolumn{13}{l}{\textit{Lotka--Volterra}} \\
Free-G & 0.99 & 0.99 & 1.00 & 1.00 & 23.60 & 34.30 & 52.05 & 75.06 & 0.94 & \textbf{0.96} & 0.70 & 0.26 \\
Scale-G & 0.99 & 0.99 & 1.00 & 1.00 & 24.42 & 25.25 & 24.07 & 24.11 & \textbf{0.95} & 0.87 & 0.63 & 0.33 \\
CE-G & 0.74 & 0.71 & 0.67 & \textbf{0.74} & \textbf{1.17} & \textbf{1.11} & \textbf{0.99} & 0.90 & 0.94 & 0.93 & 0.89 & \textbf{0.79} \\
Free-NSF & 0.97 & 0.98 & 0.99 & 1.00 & 2.93 & 3.84 & 5.43 & 7.45 & \textbf{0.95} & 0.97 & \textbf{0.90} & 0.56 \\
CE-NSF & \textbf{0.71} & \textbf{0.66} & \textbf{0.66} & 0.76 & 1.57 & 1.43 & 1.19 & \textbf{1.03} & 0.97 & \textbf{0.94} & 0.84 & 0.66 \\
\bottomrule
\end{tabular}
}
\end{table}

\clearpage
\section{Nonregular stress tests}
\label{app:boundary}

\paragraph{The four prototypes.}
Each prototype is a one- or two-dimensional model with an exact reference posterior, each chosen to probe one assumption of the structured representation:
\begin{itemize}
\item \textbf{T1, parameter-dependent support.}  $x_i\sim U[0,\theta]$, $\theta\sim U[1,3]$.  The posterior is $\propto\theta^{-n}$ on $[\max(M_n,1),3]$ with $M_n=\max_i x_i$ and contracts at rate $n^{-1}$ to a Gamma limit, the nonregular Bernstein--von Mises case of \citet{Bochkina2014}.  \emph{Violated: the imposed rate $1/2$.}
\item \textbf{T2, boundary parameter.}  $x_i\sim\cN(\theta,1)$ with $\theta\in[0,3]$ and a truncated-exponential prior of rate 2.  The posterior is $\mathrm{TN}(\bar x-2/n,\,1/n,\,[0,3])$: the rate $1/2$ holds, but the shape is non-Gaussian whenever the mass sits against the boundary at $0$.  \emph{Probed: residual shape at a boundary, which the Gaussian head can represent after renormalization.}
\item \textbf{T3, singular parameterization.}  $x_i\sim\cN(\theta_1\theta_2,1)$, $\theta\sim U[-2,2]^2$.  The posterior concentrates on the hyperbola $\theta_1\theta_2=\bar x$, a ridge that does not contract.  \emph{Violated: a single local center.}
\item \textbf{T4, persistent bimodality.}  $x_i\sim\cN(\theta^2,1)$, $\theta\sim U[-2,2]$.  The posterior has two modes at $\pm\sqrt{\bar x}$ whose separation does not shrink with $n$.  \emph{Violated: a single local center.}
\end{itemize}
Reference posteriors are computed on composite grids covering the prior support, and every estimator's density is renormalized on that support before scoring.  The two symmetric \texttt{sbibm} benchmarks belong to the same family: SLCP has four mirror-image modes (as T4), and the support boundary of two moons contracts at rate $n^{-1}$ (as T1).

\paragraph{Protocol.}
The protocol is that of the main text (Appendix~\ref{app:details}) with three differences: ten seeds instead of three, a smaller network (two hidden layers of 64 units, 3000 steps), and a nominal coverage level of $90\%$ with C2ST evaluated in closed form on the reference grid.  The structured estimators impose the same rate $\alpha=1/2$ as on the regular benchmarks, and the center-estimated estimators use the closed-form center of each prototype ($M_n$ for T1, $\max(\bar x,0)$ for T2, the nearest point on the ridge for T3, the positive mode for T4); for T1 the encoder additionally receives $M_n$ as a fixed summary, since mean pooling cannot represent a maximum.  The true rate ($1$ on T1) is used only in explicitly labeled oracle ablations.  One prototype-specific quantity is added to the main-text metrics: for T3 and T4, the distance from the posterior center to the nearest ridge or mode.

\begin{table}[htbp]
\centering
\caption{\textbf{Nonregular prototypes at $n=100$} ($5\times$ the largest
training size; ten seeds).  Each block violates a different assumption of
the structured representation.}
\label{tab:main}
\footnotesize
\begin{tabular}{l rrrr rrrr}
\toprule
 & \multicolumn{4}{c}{\emph{T1: parameter-dependent support}} & \multicolumn{4}{c}{\emph{T2: boundary parameter}} \\
\cmidrule(lr){2-5}\cmidrule(lr){6-9}
method & KL & SD ratio & center err. & C2ST & KL & SD ratio & center err. & C2ST \\
\midrule
Free-G & 12.27 & \textbf{1.09} & 4.72 & 0.92 & 0.12 & \textbf{1.00} & 0.59 & 0.61 \\
Scale-G & 1.31 & 2.69 & 2.02 & 0.82 & 0.24 & 0.83 & 0.39 & 0.61 \\
CE-G & 1.11 & 2.71 & \textbf{0.54} & 0.78 & \textbf{0.10} & 0.81 & \textbf{0.18} & \textbf{0.58} \\
Free-NSF & 0.65 & 2.02 & 0.87 & 0.70 & 0.17 & 1.02 & 0.52 & 0.61 \\
CE-NSF & \textbf{0.49} & 2.82 & 1.81 & \textbf{0.68} & 0.18 & 0.84 & 0.24 & 0.59 \\
\midrule
 & \multicolumn{4}{c}{\emph{T3: singular parameterization}} & \multicolumn{4}{c}{\emph{T4: persistent bimodality}} \\
\cmidrule(lr){2-5}\cmidrule(lr){6-9}
method & KL & SD ratio & center err. & C2ST & KL & SD ratio & center err. & C2ST \\
\midrule
Free-G & 2.34 & 0.84 & \textbf{0.01} & 0.90 & 2.28 & 0.85 & 0.01 & 0.86 \\
Scale-G & 7.65 & 0.36 & \textbf{0.01} & 0.95 & 5.01 & 0.39 & \textbf{0.00} & 0.93 \\
CE-G & 8.40 & 0.36 & 0.40 & 0.92 & 5.67 & 0.39 & 0.43 & 0.90 \\
Free-NSF & \textbf{0.67} & \textbf{0.95} & 0.11 & \textbf{0.70} & \textbf{0.14} & \textbf{1.03} & 0.01 & \textbf{0.61} \\
CE-NSF & 26.81 & 0.36 & 0.50 & 0.89 & 23.48 & 0.39 & 0.60 & 0.79 \\
\bottomrule
\end{tabular}

\end{table}

\paragraph{Rate violation: T1.}
T1 contracts at rate $n^{-1}$ while every estimator imposes $n^{-1/2}$.  The estimated posterior is therefore too wide, by a factor that grows with $n$, consistent with the rate sensitivity of Proposition~\ref{prop:horizon}.  Table~\ref{tab:b1-rate} then changes only the imposed exponent: the default $1/2$, the value $0.8$ that the training-range estimate of Appendix~\ref{app:alpha-hat} supplies, and the oracle $1$.  Width, center error and KL improve smoothly as the exponent approaches $1$, while coverage stays close to nominal for the default and estimated exponents, because a too wide posterior fails gracefully, and falls slightly below it at the oracle exponent.  An estimated rate therefore repairs most of the failure without being exact.

\begin{table}[H]
\centering
\caption{Result on T1 at $n=100$.}
\label{tab:b1-rate}
\resizebox{\linewidth}{!}{\begin{tabular}{l rrrrr rrrrr}
\toprule
 & \multicolumn{5}{c}{CE-G} & \multicolumn{5}{c}{CE-NSF} \\
\cmidrule(lr){2-6}\cmidrule(lr){7-11}
imposed $\alpha$ & SD ratio & center err. & cov.\ 90\% & C2ST & KL & SD ratio & center err. & cov.\ 90\% & C2ST & KL \\
\midrule
$1/2$ (default) & 2.71 & 0.54 & 0.97 & 0.78 & 1.10 & 2.82 & 1.81 & 0.97 & 0.68 & 0.49 \\
$0.8$ (estimated) & 1.40 & 0.25 & \textbf{0.91} & 0.72 & 0.67 & 1.42 & 0.59 & \textbf{0.92} & 0.59 & 0.14 \\
$1$ (oracle ablation) & \textbf{0.92} & \textbf{0.15} & 0.86 & \textbf{0.69} & \textbf{0.48} & \textbf{0.92} & \textbf{0.23} & 0.85 & \textbf{0.56} & \textbf{0.03} \\
\bottomrule
\end{tabular}
}
\end{table}

\paragraph{Boundary shape: T2.}
The fixed-$\theta_0$ strata of Table~\ref{tab:strata} isolate the boundary; away from it ($\theta_0=0.5$) every estimator is accurate, and at it neither residual learns the truncated shape. This is a learning failure, not a representational one: every estimator's density is renormalized on $[0,3]$ before scoring, and a Gaussian with mean $\bar x-2/n$ and variance $1/n$ so renormalized equals the true posterior exactly. The exact posterior's own $90\%$ HPD coverage of $\theta_0=0$ is $0.96$ at $n=20$ and $0.94$ at $n=100$, not the nominal $0.90$, because the truth sits on the boundary.

\begin{table}[H]
\centering
\caption{Fixed-$\theta_0$ strata, $n=20\to100$ (ten seeds).  T2 at the boundary ($\theta_0=0$), with the exact posterior as reference; T3/T4, distance from each estimator's posterior mean to the nearest ridge or mode in units of its width, which grows with $n$ for any single-center summary, the exact posterior mean included.  Accuracy on T3 and T4 is read from Table~\ref{tab:main}.}
\label{tab:strata}
\resizebox{\linewidth}{!}{\begin{tabular}{l rrrr rr}
\toprule
 & \multicolumn{4}{c}{T2, $\theta_0=0$ stratum} & T3 & T4 \\
\cmidrule(lr){2-5}\cmidrule(lr){6-6}\cmidrule(lr){7-7}
estimator & SD ratio & center err. & cov.\ 90\% & KL & ridge dist. & mode dist. \\
\midrule
Exact posterior & 1.00$\to$1.00 & 0.00$\to$0.00 & 0.96$\to$0.94 & 0.00$\to$0.00 & -- & -- \\
Free-G & 0.85$\to$0.74 & 0.19$\to$0.29 & 0.70$\to$0.93 & 0.11$\to$0.11 & 3.4$\to$7.9 & 9.5$\to$21.2 \\
Scale-G & 0.83$\to$0.86 & 0.19$\to$0.61 & 0.53$\to$0.03 & 0.13$\to$0.43 & 3.4$\to$7.9 & 9.5$\to$21.2 \\
CE-G & 0.83$\to$0.79 & 0.19$\to$0.22 & 0.60$\to$0.58 & 0.11$\to$0.16 & 3.4$\to$6.3 & 9.2$\to$12.7 \\
Free-NSF & 1.01$\to$1.14 & 0.03$\to$0.58 & 0.77$\to$0.26 & 0.01$\to$0.23 & 3.4$\to$7.8 & 9.5$\to$21.0 \\
CE-NSF & 0.97$\to$0.90 & 0.08$\to$0.18 & 0.68$\to$0.68 & 0.02$\to$0.07 & 3.3$\to$5.3 & 7.7$\to$8.1 \\
\bottomrule
\end{tabular}
}
\end{table}

\begin{table}[H]
\vspace{-10pt}
\centering
\caption{\textbf{The two symmetric multimodal \texttt{sbibm} benchmarks.}}
\label{tab:sbibm-symmetric}
\setlength{\tabcolsep}{3pt}
\renewcommand{\arraystretch}{1.0}
\resizebox{\linewidth}{!}{\begin{tabular}{l rrrr rrrr rrrr}
\toprule
 & \multicolumn{4}{c}{C2ST} & \multicolumn{4}{c}{MMD} & \multicolumn{4}{c}{$95\%$ joint coverage} \\
\cmidrule(lr){2-5}\cmidrule(lr){6-9}\cmidrule(lr){10-13}
estimator & $n=10$ & $n=20$ & $n=50$ & $n=100$ & $n=10$ & $n=20$ & $n=50$ & $n=100$ & $n=10$ & $n=20$ & $n=50$ & $n=100$ \\
\midrule
\multicolumn{13}{l}{\textit{SLCP}} \\
Free-G & 0.98 & 0.98 & 0.99 & 0.99 & 0.064 & 0.073 & 0.084 & 0.093 & 0.96 & 0.99 & \textbf{0.94} & \textbf{0.87} \\
Scale-G & 0.98 & 0.98 & 1.00 & 1.00 & 0.058 & 0.142 & 0.363 & 0.532 & 0.97 & 0.92 & 0.20 & 0.00 \\
CE-G & 0.98 & 0.98 & 0.98 & 0.99 & 0.040 & 0.072 & 0.144 & 0.208 & \textbf{0.95} & \textbf{0.94} & 0.87 & 0.69 \\
Free-NSF & \textbf{0.75} & \textbf{0.79} & \textbf{0.83} & \textbf{0.91} & \textbf{0.004} & \textbf{0.004} & \textbf{0.007} & \textbf{0.013} & 0.92 & 0.98 & \textbf{0.96} & 0.82 \\
CE-NSF & 0.91 & 0.97 & 0.98 & 0.98 & 0.147 & 0.290 & 0.427 & 0.514 & 0.27 & 0.15 & 0.13 & 0.09 \\
\addlinespace[4pt]
\multicolumn{13}{l}{\textit{Two moons}} \\
Free-G & 0.98 & 0.98 & 0.99 & 0.99 & 0.129 & 0.141 & 0.165 & 0.197 & \textbf{0.96} & 0.99 & \textbf{0.98} & \textbf{0.77} \\
Scale-G & 0.98 & 0.98 & 0.99 & 0.99 & 0.109 & 0.223 & 0.475 & 0.660 & 0.97 & 0.97 & 0.81 & 0.09 \\
CE-G & 0.97 & 0.98 & 0.99 & 0.99 & 0.112 & 0.195 & 0.347 & 0.432 & 0.98 & \textbf{0.95} & 0.57 & 0.39 \\
Free-NSF & \textbf{0.80} & \textbf{0.81} & \textbf{0.90} & \textbf{0.97} & \textbf{0.014} & 0.012 & \textbf{0.018} & \textbf{0.050} & \textbf{0.96} & 0.99 & 0.89 & 0.41 \\
CE-NSF & \textbf{0.80} & \textbf{0.81} & 0.94 & 0.98 & 0.016 & \textbf{0.011} & 0.119 & 0.221 & \textbf{0.96} & 0.96 & 0.40 & 0.11 \\
\bottomrule
\end{tabular}
}
\end{table}

\paragraph{No single local center: T3, T4, SLCP and two moons.}
On T3 and T4 every single-center estimator degrades with $n$; the free spline estimator degrades least (C2ST $0.70$ and $0.61$ at $n=100$ against $0.79$--$0.95$ for the others, Table~\ref{tab:main}) but is not accurate either.  On the two symmetric \texttt{sbibm} benchmarks (Table~\ref{tab:sbibm-symmetric}) every single-center estimator fails at every size, including inside the training range; the free spline estimator retains far more of the structure (MMD an order of magnitude lower) yet its C2ST at $n=100$ is still $0.91$ on SLCP and $0.97$ on two moons, so it does not recover the posterior either.

\clearpage
\section{Mechanism ablations and matched controls}
\label{app:diagnostics}
\label{app:revision-results}

Each subsection answers one question about the main-text comparison: whether the free baselines are under-trained, whether the ordering depends on the training maximum, whether the imposed rate matters once the center is held fixed, whether the exponent can be estimated from the training range, whether the residual network needs the size input, and how a compositional estimator compares.

\subsection{Best-effort free baselines}
\label{app:free-best}

The free baselines of the main text share the training budget of every other estimator.  To check whether they lose because they are under-trained rather than because they cannot extrapolate, Free-G and Free-NSF were retrained on all six regular benchmarks under a best-effort protocol: the same architecture, prior-standardized parameters, the learning rate selected from $\{10^{-3},3\times10^{-4},10^{-4}\}$ by in-range validation NLL, three times the step budget and a longer early-stopping patience.  No out-of-range data enter training, selection or calibration.  Table~\ref{tab:free-best} places the result beside the standard estimators at the training boundary and at $5\times$.

Within the training range, the extra effort narrows the gap on four benchmarks: on both Gaussian linear tasks, the Bernoulli GLM and SIR the best-effort baselines fit markedly better at the boundary, and on Gaussian linear with a uniform prior the best-effort Free-NSF matches the CE estimators there. On the Gaussian mixture and Lotka--Volterra, by contrast, the large in-range errors remain. Beyond the range the picture is the same on every benchmark: the best-effort baselines still widen and drift at $5\times$, their C2ST never falls below $0.83$ while CE-NSF stays at or below $0.76$, and on Gaussian linear the tuned Free-G loses most of its joint coverage. Extra training effort therefore narrows the in-range gap but does not change the out-of-range ordering.

\begin{table}[htbp]
\centering
\caption{\textbf{Best-effort free baselines beside the standard estimators} at the training boundary ($n=20$) and at $5\times$ ($n=100$); seed means over three seeds. Bold marks the best entry in each column and block (closest to the target where the metric has one).}
\label{tab:free-best}
\footnotesize
\resizebox{\linewidth}{!}{\begin{tabular}{lrrrrrrrr}
\toprule
 & \multicolumn{4}{c}{$n=20$ (training boundary)} & \multicolumn{4}{c}{$n=100$ ($5\times$)} \\
\cmidrule(lr){2-5}\cmidrule(lr){6-9}
estimator & SD ratio & center err. & cov. & C2ST & SD ratio & center err. & cov. & C2ST \\
\midrule
\multicolumn{9}{l}{\emph{Gaussian linear}} \\
Free-G & 1.09 & 0.11 & 0.97 & 0.61 & 1.41 & 0.59 & 0.86 & 0.89 \\
Free-G (best effort) & 1.01 & 0.07 & \textbf{0.96} & 0.54 & \textbf{0.98} & 0.81 & 0.39 & 0.93 \\
Free-NSF & 1.06 & 0.11 & 0.98 & 0.58 & 1.53 & 0.53 & 1.00 & 0.87 \\
Free-NSF (best effort) & 1.03 & 0.08 & \textbf{0.96} & 0.54 & 1.42 & 0.44 & 1.00 & 0.83 \\
CE-G & \textbf{1.00} & 0.05 & \textbf{0.96} & \textbf{0.51} & 1.06 & 0.12 & 0.97 & 0.57 \\
CE-NSF & 1.01 & \textbf{0.04} & \textbf{0.96} & \textbf{0.51} & \textbf{0.98} & \textbf{0.10} & \textbf{0.93} & \textbf{0.56} \\
\midrule
\multicolumn{9}{l}{\emph{Gaussian linear (uniform)}} \\
Free-G & 1.97 & 1.03 & 0.98 & 0.88 & 3.62 & 2.42 & 0.55 & 0.98 \\
Free-G (best effort) & 1.05 & 0.13 & \textbf{0.96} & 0.64 & 1.13 & 0.78 & 0.61 & 0.92 \\
Free-NSF & 1.12 & 0.15 & 0.98 & 0.62 & 1.62 & 0.74 & 0.99 & 0.92 \\
Free-NSF (best effort) & 1.04 & 0.09 & \textbf{0.96} & \textbf{0.56} & 1.37 & 0.48 & 0.98 & 0.83 \\
CE-G & \textbf{1.01} & 0.10 & \textbf{0.94} & 0.61 & 1.04 & 0.10 & 0.97 & 0.57 \\
CE-NSF & 1.02 & \textbf{0.08} & \textbf{0.94} & 0.59 & \textbf{0.98} & \textbf{0.08} & \textbf{0.93} & \textbf{0.56} \\
\midrule
\multicolumn{9}{l}{\emph{Gaussian mixture}} \\
Free-G & 3.42 & 2.48 & 0.93 & 0.90 & 5.88 & 3.67 & 0.98 & 0.95 \\
Free-G (best effort) & 3.14 & 2.36 & 0.94 & 0.89 & 5.10 & 3.20 & 0.98 & 0.95 \\
Free-NSF & 3.76 & 2.48 & 0.94 & 0.89 & 5.41 & 4.60 & 0.86 & 0.95 \\
Free-NSF (best effort) & 3.75 & 2.34 & \textbf{0.95} & 0.89 & 6.82 & 3.68 & 0.98 & 0.96 \\
CE-G & \textbf{1.14} & 0.07 & 0.94 & 0.55 & 0.87 & 0.06 & 0.88 & 0.54 \\
CE-NSF & 1.20 & \textbf{0.06} & 0.94 & \textbf{0.52} & \textbf{1.04} & \textbf{0.05} & \textbf{0.93} & \textbf{0.51} \\
\midrule
\multicolumn{9}{l}{\emph{Bernoulli GLM}} \\
Free-G & 8.88 & 7.00 & 0.94 & 1.00 & 19.49 & 16 & 0.88 & 1.00 \\
Free-G (best effort) & 2.84 & 1.95 & \textbf{0.95} & 0.99 & 5.84 & 4.82 & 0.77 & 1.00 \\
Free-NSF & 2.97 & 1.94 & \textbf{0.95} & 0.99 & 6.02 & 4.70 & 0.91 & 1.00 \\
Free-NSF (best effort) & 1.71 & 0.88 & 0.96 & 0.92 & 3.17 & 3.14 & 0.87 & 1.00 \\
CE-G & \textbf{1.04} & \textbf{0.13} & 0.94 & 0.67 & 1.10 & 0.12 & 0.97 & 0.68 \\
CE-NSF & \textbf{1.04} & \textbf{0.13} & \textbf{0.95} & \textbf{0.64} & \textbf{1.08} & \textbf{0.08} & \textbf{0.95} & \textbf{0.65} \\
\midrule
\multicolumn{9}{l}{\emph{SIR}} \\
Free-G & 3.07 & 2.07 & \textbf{0.96} & 0.92 & 7.43 & 18 & 0.36 & 0.99 \\
Free-G (best effort) & 1.41 & 0.47 & 0.98 & 0.71 & 2.91 & 5.27 & 0.47 & 0.94 \\
Free-NSF & 1.80 & 0.70 & 0.97 & 0.79 & 3.78 & 8.07 & 0.38 & 0.96 \\
Free-NSF (best effort) & 1.32 & 0.37 & 0.98 & 0.68 & 2.71 & 4.80 & 0.44 & 0.94 \\
CE-G & \textbf{1.08} & 0.17 & 0.93 & 0.58 & \textbf{2.07} & 0.41 & \textbf{0.97} & 0.77 \\
CE-NSF & 1.30 & \textbf{0.16} & 0.93 & \textbf{0.56} & 2.31 & \textbf{0.33} & 0.76 & \textbf{0.74} \\
\midrule
\multicolumn{9}{l}{\emph{Lotka--Volterra}} \\
Free-G & 34.30 & 22 & \textbf{0.96} & 0.99 & 75.06 & 47 & 0.26 & 1.00 \\
Free-G (best effort) & 29.73 & 20 & 0.97 & 0.99 & 65.84 & 45 & \textbf{0.85} & 1.00 \\
Free-NSF & 3.84 & 1.83 & 0.97 & 0.98 & 7.45 & 9.39 & 0.56 & 1.00 \\
Free-NSF (best effort) & 2.14 & 1.10 & 0.97 & 0.95 & 3.70 & 3.05 & 0.64 & 0.99 \\
CE-G & \textbf{1.11} & \textbf{0.19} & 0.93 & 0.71 & 0.90 & \textbf{0.13} & 0.79 & \textbf{0.74} \\
CE-NSF & 1.43 & \textbf{0.19} & \textbf{0.94} & \textbf{0.66} & \textbf{1.03} & 0.19 & 0.66 & 0.76 \\
\bottomrule
\end{tabular}
}
\end{table}

\subsection{A second training maximum on SIR}
\label{app:sir-n80}

To check that the ordering of the main text is a statement about the relative size $n/N$ rather than about the particular boundary $N=20$, every estimator (and the two best-effort free baselines) was retrained on SIR with training sizes drawn uniformly from $\{20,\dots,80\}$ and paired evaluation at $n\in\{40,80,200,400\}$, with fresh reference posteriors at those sizes and three seeds.  Table~\ref{tab:sir-n80} reports both training maxima on the common $n/N$ axis.  This study reuses the CE arms of the development runs, whose center is the likelihood-based Fisher-scoring estimator, at both training maxima; their $N=20$ rows therefore differ slightly from the main-text SIR rows, which use the likelihood-free moment center of Appendix~\ref{app:benchmarks}. The study was not rerun with the moment center because the ordering at $N=20$ is the same under either center.

CE retains the best C2ST at $5N$ at both training maxima, but the absolute errors differ. At $N=80$, CE-G has a width ratio of $1.16$ and joint coverage of $0.86$ at $5N$, compared with $1.83$ and $0.96$ at $N=20$; CE-NSF undercovers at $5N$ in both settings. The best-effort free baselines improve in-range fitting but still deteriorate out of range. Increasing the training maximum therefore preserves the observed C2ST ordering without making either accuracy or coverage invariant to $N$.

\IfFileExists{tables/sir_n80.tex}{
\begin{table}[htbp]
\centering
\caption{\textbf{SIR at two training maxima} against the relative size $n/N$.}
\label{tab:sir-n80}
\footnotesize
\resizebox{\linewidth}{!}{\begin{tabular}{llrrrrrrrrrrrr}
\toprule
 & & \multicolumn{4}{c}{C2ST} & \multicolumn{4}{c}{SD ratio} & \multicolumn{4}{c}{$95\%$ joint coverage} \\
\cmidrule(lr){3-6}\cmidrule(lr){7-10}\cmidrule(lr){11-14}
$N$ & estimator & $0.5N$ & $1N$ & $2.5N$ & $5N$ & $0.5N$ & $1N$ & $2.5N$ & $5N$ & $0.5N$ & $1N$ & $2.5N$ & $5N$ \\
\midrule
20 & Free-G & 0.88 & 0.92 & 0.97 & 0.99 & 2.27 & 3.07 & 4.37 & 7.43 & 0.94 & \textbf{0.96} & 0.61 & 0.36 \\
20 & Scale-G & 0.88 & 0.90 & 0.96 & 0.99 & 2.30 & 2.60 & 3.09 & 4.46 & \textbf{0.95} & 0.92 & 0.46 & 0.15 \\
20 & CE-G & 0.57 & 0.55 & 0.64 & 0.76 & \textbf{1.02} & \textbf{1.00} & \textbf{1.14} & \textbf{1.83} & 0.94 & 0.92 & \textbf{0.95} & \textbf{0.96} \\
20 & Free-NSF & 0.74 & 0.79 & 0.92 & 0.96 & 1.46 & 1.80 & 2.44 & 3.78 & \textbf{0.95} & 0.97 & 0.63 & 0.38 \\
20 & CE-NSF & \textbf{0.55} & \textbf{0.53} & \textbf{0.61} & \textbf{0.75} & 1.47 & 1.26 & 1.77 & 2.40 & 0.94 & 0.93 & 0.93 & 0.70 \\
\midrule
80 & Free-G & 0.96 & 0.97 & 0.99 & 0.99 & 5.30 & 6.79 & 11.63 & 17.81 & \textbf{0.95} & 0.97 & 0.73 & 0.56 \\
80 & Free-G (best effort) & 0.78 & 0.82 & 0.92 & 0.96 & 1.40 & 1.84 & 3.05 & 5.07 & 0.94 & 0.97 & 0.73 & 0.58 \\
80 & Scale-G & 0.95 & 0.96 & 0.98 & 0.99 & 4.36 & 4.51 & 5.90 & 7.79 & \textbf{0.95} & 0.92 & 0.61 & 0.31 \\
80 & CE-G & 0.58 & 0.58 & \textbf{0.64} & \textbf{0.72} & \textbf{1.04} & \textbf{1.03} & \textbf{1.02} & \textbf{1.16} & 0.94 & \textbf{0.94} & \textbf{0.92} & \textbf{0.86} \\
80 & Free-NSF & 0.89 & 0.91 & 0.97 & 0.99 & 2.69 & 3.50 & 5.95 & 10.05 & 0.96 & 0.97 & 0.71 & 0.47 \\
80 & Free-NSF (best effort) & 0.76 & 0.80 & 0.91 & 0.96 & 1.43 & 1.79 & 2.69 & 4.20 & 0.94 & 0.97 & 0.72 & 0.52 \\
80 & CE-NSF & \textbf{0.54} & \textbf{0.55} & \textbf{0.64} & \textbf{0.72} & 1.60 & 1.47 & 1.94 & 2.59 & 0.94 & \textbf{0.94} & \textbf{0.92} & 0.74 \\
\bottomrule
\end{tabular}
}
\end{table}
}{}

\subsection{Does the imposed rate matter at a fixed center?}
\label{app:matched}

This study compares, on the Gaussian mixture, two fixed-center controls built on the same ten-step EM center, the full CE-G model, and a standardized, tuned Free-G baseline.  The two controls keep the posterior mean exactly at the EM estimate (residual mean constrained to zero, no correction network) and differ only in whether the covariance carries the imposed $n^{-1/2}$ factor; CE-G is the full method; Free-G (standardized, tuned) is the free baseline with prior-standardized parameters and a learning rate selected by in-range validation.  The study uses the main-text protocol (training sizes up to $20$, evaluation at $20$ and $100$; Table~\ref{tab:revision-gmm}).

With the center held fixed, imposing the contraction factor improves C2ST and KL at $5\times$ over letting the covariance scale freely, while joint coverage moves from $0.974$ to $0.909$ against a nominal $0.95$; the rate parameterization therefore matters on its own for agreement with the reference, not for coverage.  In this comparison the fixed-center, imposed-scale control has lower C2ST and KL at $5\times$ than the full CE-G model, so the study isolates a benefit of scaling at a fixed center but does not show an additional gain from the learned location adjustment: once the center is accurate and the rate is right, most of the gain is already obtained, which is the scope stated in Section~\ref{sec:discussion}.

\IfFileExists{tables/revision_gmm.tex}{
\begin{table}[H]
\centering\footnotesize
\caption{Fixed-center controls on the Gaussian mixture at the training boundary ($n=20$) and $5\times$ ($n=100$); means across three seeds.}
\label{tab:revision-gmm}
\resizebox{\linewidth}{!}{\begin{tabular}{lrrrrr}
\toprule
method & C2ST at $n=20$ & KL at $n=20$ & C2ST at $n=100$ & KL at $n=100$ & cov. at $n=100$ \\
\midrule
Free-G (standardized, tuned) & 0.898 & 2.317 & 0.942 & 2.980 & \textbf{0.966} \\
CE-G ($\log n$ residual) & \textbf{0.552} & \textbf{0.121} & 0.537 & 0.096 & 0.870 \\
Fixed EM center, free scale & 0.574 & 0.174 & 0.623 & 0.328 & 0.974 \\
Fixed EM center, imposed scale & 0.558 & 0.137 & \textbf{0.513} & \textbf{0.033} & 0.909 \\
\bottomrule
\end{tabular}
}
\end{table}
}{}

The same two controls were run on SIR and Lotka--Volterra with the simulator-based moment center, under the main-text protocol and on the paired cases of Appendix~\ref{app:regular} (Table~\ref{tab:fixed-center-sir-lv}, C2ST only).  Imposing the contraction factor at a fixed center is again what matters: with the covariance free, C2ST rises with sample size on both tasks, whereas with the factor imposed it stays close to CE-G at every size and is slightly below it at $5\times$ (SIR $0.750$ against $0.772$, Lotka--Volterra $0.704$ against $0.737$).  The learned location adjustment therefore adds nothing measurable on these two tasks either.  The spline head does: CE-NSF has the lowest C2ST at every in-range size and on SIR at every size, but at $5\times$ on Lotka--Volterra it falls behind the fixed-center Gaussian control, and on both tasks its joint coverage at $n=100$ is well below nominal (Appendix~\ref{app:regular}), so the in-range benefit of residual flexibility does not carry over intact to the largest extrapolation.

\IfFileExists{tables/fixed_center_sir_lv.tex}{
\begin{table}[H]
\centering\footnotesize
\caption{Fixed-center controls on SIR and Lotka--Volterra: C2ST at every evaluation size (training sizes $5$ to $20$); means across three seeds, lowest per column in bold.}
\label{tab:fixed-center-sir-lv}
\resizebox{\linewidth}{!}{\begin{tabular}{lrrrrrrrr}
\toprule
 & \multicolumn{4}{c}{SIR} & \multicolumn{4}{c}{Lotka--Volterra} \\
\cmidrule(lr){2-5}\cmidrule(lr){6-9}
estimator & $n=10$ & $n=20$ & $n=50$ & $n=100$ & $n=10$ & $n=20$ & $n=50$ & $n=100$ \\
\midrule
Fixed center, free scale & 0.653 & 0.642 & 0.789 & 0.897 & 0.768 & 0.806 & 0.886 & 0.940 \\
Fixed center, imposed scale & 0.634 & 0.572 & 0.647 & 0.750 & 0.729 & 0.704 & 0.656 & \textbf{0.704} \\
CE-G & 0.640 & 0.579 & 0.663 & 0.772 & 0.738 & 0.713 & 0.671 & 0.737 \\
CE-NSF & \textbf{0.602} & \textbf{0.559} & \textbf{0.614} & \textbf{0.740} & \textbf{0.707} & \textbf{0.665} & \textbf{0.655} & 0.757 \\
\bottomrule
\end{tabular}
}
\end{table}
}{}

\subsection{Can the exponent be estimated from the training range?}
\label{app:alpha-hat}

CE-NPE imposes the contraction rate.  When no asymptotic argument supplies it, the only data available at training time are simulations at the training sizes, and the natural estimate is the rate at which the statistical center itself contracts there: for each of $J=64$ parameters drawn from the prior and each training size $n\in\{5,8,12,16,20\}$ we simulate $R=32$ replicate datasets, compute the spread of $c_{\mathrm{stat}}$ across replicates, and fit $\log\mathrm{SD}=a-\hat\alpha\log n$ by least squares.  This needs no likelihood and no reference posterior.  Table~\ref{tab:alpha-hat} reports $\hat\alpha$ on every benchmark next to the rate CE-NPE imposes ($1/2$ throughout) and the true asymptotic rate.  

Three readings follow.  On the regular simulators other than the mixture, the estimate is close to the regular rate.  On T1 it gives $0.80$, which is the finite-$n$ contraction of the maximum over $n\in[5,20]$ rather than the asymptotic $1$; imposing $0.80$ is exactly the estimated-rate row of Table~\ref{tab:b1-rate}, which repairs most of the T1 failure, so the estimate is useful where the asymptotic rate is unknown and wrong.  On the Gaussian mixture, the estimate is $1.38$, the narrow mixture component is identified between $n=5$ and $n=8$, and the center spread contracts much faster than $n^{-1/2}$ in this transient regime before settling to the regular rate.  An estimator that trusted the training range would impose an exponent that over-contracts out of range.  A training-range estimate is therefore a check on the model-class rate, not a substitute for it: when the two disagree, the disagreement itself indicates that the training range is not yet in the regime whose rate one wants to extrapolate.

\begin{table}[htbp]
\centering
\caption{Contraction exponent estimated from the training range.}
\label{tab:alpha-hat}
\begin{tabular}{lccc}
\toprule
benchmark & imposed rate & true asymptotic rate & estimated $\alpha$ \\
\midrule
Gaussian linear & $1/2$ & $1/2$ & $0.50$ \\
Gaussian linear (uniform) & $1/2$ & $1/2$ & $0.48$ \\
Gaussian mixture & $1/2$ & $1/2$ & $1.38$ \\
Bernoulli GLM & $1/2$ & $1/2$ & $0.46$ \\
SIR & $1/2$ & $1/2$ & $0.56$ \\
Lotka--Volterra & $1/2$ & $1/2$ & $0.48$ \\
T1 (support) & $1/2$ & $1$ & $0.80$ \\
T2 (boundary) & $1/2$ & $1/2$ & $0.43$ \\
T3 (singular) & $1/2$ & $\text{ridge}$ & $0.44$ \\
T4 (bimodal) & $1/2$ & $1/2$ & $0.47$ \\
\bottomrule
\end{tabular}

\end{table}

\subsection{Does the residual network still need $n$?}
\label{app:residual-context}
\paragraph{Gamma model and protocol.}
Training sizes are $\{20,50,100,200,500,1000\}$ and evaluation sizes are $\{1000,2000,5000,10000\}$.  The encoder is a Deep Sets network over the log-transformed observations.  The structured center is the log of the maximum-likelihood (equivalently method-of-moments) rate estimate plus a learned correction applied in rescaled units, $c_n=\log\hat\theta_{\mathrm{MLE}}+n^{-1/2}\Delta_\phi(s_n,\log n)$; the residual density on $z_n$ is a one-dimensional rational-quadratic spline flow with linear tails \citep{durkan2019neural}.  Each diagnostic uses five seeds.

We hold the structured center and $\sqrt n$ transformation fixed and change only whether the residual network receives $\log n$, $1/n$, or no size input.  As Table~\ref{tab:n-context} shows, all three variants have a fitted exponent of $0.500$ and nearly identical SD ratios, coverage, center error, and Kolmogorov--Smirnov (KS) statistics for simulation-based calibration (SBC) \citep{talts2018validating} at $10\times$.  The $\log n$ context improves KL from $0.0023$ without context to $0.0018$; $1/n$ gives $0.0012$. These differences concern finite-sample density fitting, while the contraction exponents remain nearly identical.  Sample size is not unnecessary: $n$ still enters the analytic transformation and the structured center.  In this Gamma ablation, explicit neural context becomes secondary once the dominant contraction law is encoded.
\begin{table}[H]
\centering
\caption{Matched residual-context comparison at $n=10{,}000$ (means over five seeds; coverage is the nominal $90\%$ region).}
\label{tab:n-context}
\resizebox{\linewidth}{!}{\begin{tabular}{lrrrrrr}
\toprule
residual context & KL & SD ratio & 90\% cov. & center error & SBC KS & exponent \\
\midrule
$\log n$ & $0.0018$ & $0.991$ & $0.886$ & $0.061$ & $0.039$ & $0.500$ \\
$1/n$ & $0.0012$ & $0.998$ & $0.889$ & $0.059$ & $0.038$ & $0.500$ \\
none & $0.0023$ & $0.992$ & $0.888$ & $0.065$ & $0.041$ & $0.500$ \\
\bottomrule
\end{tabular}
}
\end{table}

\subsection{A score-based compositional comparator on the six regular benchmarks}
\label{app:pfnpse}
Compositional estimators handle a large $n$ by learning the posterior score for a block of observations and summing block scores at test time, so every $n$ is supported without any single estimator having to extrapolate.  The score network of PF-NPSE \citep{geffner2023compositional} conditions on a block of up to $20$ observations; at test time the $n$ observations are partitioned into blocks of $20$ plus a remainder, each block is scored by the same network given its block size, and the block scores are summed with the diffused prior score subtracted once per additional block.  For conditionally independent observations, as on all six benchmarks, the factorization of the posterior into block posteriors is exact at noise level zero, but the sum of diffused block scores is an approximation at positive noise levels \citep{geffner2023compositional,linhart2026diffusion}; at $n\le 20$ the dataset forms a single block and no composition takes place, so only the two out-of-range sizes exercise it.  We run the authors' public implementation under the main-text protocol: the same number of training datasets as one main-text estimator, each simulated with $20$ replicates and randomly masked to a block size in $\{5,\dots,20\}$ (so about $1.6$ times the replicates a main-text estimator sees), the analytic diffused prior of the authors' code, which ignores the mild prior truncation of SIR and Lotka--Volterra, in-range validation only, three seeds, and the first 200 of the paired evaluation cases at every size.  Two samplers draw from the composed score: the published annealed Langevin sampler and the GAUSS sampler of \citet{linhart2026diffusion}, which corrects the composition with a Gaussian approximation.  Neither provides a density, so coverage is not reported.  Table~\ref{tab:pfnpse} places both beside Free-NSF and CE-NSF.

\begin{table}[!t]
\centering\footnotesize
\caption{\textbf{Score-based compositional estimator on the six regular benchmarks.}  PF-NPSE (three seeds) with the published annealed Langevin sampler and with the GAUSS sampler of \citet{linhart2026diffusion}.  Coverage requires a density and is not available for PF-NPSE.}
\label{tab:pfnpse}
\setlength{\tabcolsep}{4pt}
\renewcommand{\arraystretch}{1.15}
\begin{tabular}{l rrrr rrrr}
\toprule
 & \multicolumn{4}{c}{C2ST} & \multicolumn{4}{c}{SD ratio} \\
\cmidrule(lr){2-5}\cmidrule(lr){6-9}
estimator & $n=10$ & $n=20$ & $n=50$ & $n=100$ & $n=10$ & $n=20$ & $n=50$ & $n=100$ \\
\midrule
\multicolumn{9}{l}{\textit{Gaussian linear}} \\
Free-NSF & 0.55 & 0.58 & 0.72 & 0.87 & \textbf{1.01} & 1.06 & 1.26 & 1.53 \\
PF-NPSE (Langevin) & 0.56 & 0.57 & 0.57 & 0.62 & 1.11 & 1.12 & 1.03 & \textbf{0.98} \\
PF-NPSE (GAUSS) & \textbf{0.51} & 0.52 & 0.55 & 0.59 & \textbf{0.99} & \textbf{0.99} & \textbf{1.00} & \textbf{0.98} \\
CE-NSF & \textbf{0.51} & \textbf{0.51} & \textbf{0.54} & \textbf{0.56} & 1.02 & \textbf{1.01} & 0.99 & \textbf{0.98} \\
\addlinespace[4pt]
\multicolumn{9}{l}{\textit{Gaussian linear (uniform)}} \\
Free-NSF & 0.58 & 0.62 & 0.78 & 0.92 & \textbf{1.03} & 1.12 & 1.35 & 1.62 \\
PF-NPSE (Langevin) & 0.60 & 0.61 & 0.64 & 0.72 & 1.10 & 1.14 & 1.06 & 1.04 \\
PF-NPSE (GAUSS) & \textbf{0.57} & \textbf{0.59} & 0.68 & 0.74 & 0.94 & 0.96 & \textbf{1.00} & 1.04 \\
CE-NSF & 0.59 & \textbf{0.59} & \textbf{0.56} & \textbf{0.56} & 1.04 & \textbf{1.02} & \textbf{1.00} & \textbf{0.98} \\
\addlinespace[4pt]
\multicolumn{9}{l}{\textit{Gaussian mixture}} \\
Free-NSF & 0.87 & 0.89 & 0.93 & 0.95 & 3.14 & 3.44 & 4.08 & 4.76 \\
PF-NPSE (Langevin) & 0.94 & 0.95 & 0.96 & 0.96 & 5.65 & 6.06 & 5.62 & 5.83 \\
PF-NPSE (GAUSS) & 0.93 & 0.94 & 0.95 & 0.96 & 4.24 & 4.27 & 5.19 & 6.10 \\
CE-NSF & \textbf{0.56} & \textbf{0.52} & \textbf{0.51} & \textbf{0.51} & \textbf{1.33} & \textbf{1.15} & \textbf{1.05} & \textbf{1.01} \\
\addlinespace[4pt]
\multicolumn{9}{l}{\textit{Bernoulli GLM}} \\
Free-NSF & 0.98 & 0.99 & 1.00 & 1.00 & 2.24 & 2.97 & 4.42 & 6.02 \\
PF-NPSE (Langevin) & 0.75 & 0.83 & 0.89 & 0.94 & 1.18 & 1.33 & 1.14 & 1.10 \\
PF-NPSE (GAUSS) & 0.71 & 0.79 & 0.88 & 0.93 & \textbf{1.02} & 1.15 & 1.09 & \textbf{1.07} \\
CE-NSF & \textbf{0.67} & \textbf{0.64} & \textbf{0.63} & \textbf{0.65} & 1.05 & \textbf{1.04} & \textbf{1.05} & 1.08 \\
\addlinespace[4pt]
\multicolumn{9}{l}{\textit{SIR}} \\
Free-NSF & 0.74 & 0.79 & 0.92 & 0.96 & 1.46 & 1.80 & 2.44 & 3.78 \\
PF-NPSE (Langevin) & 0.78 & 0.83 & 0.84 & 0.91 & 1.70 & 2.09 & 2.19 & 3.76 \\
PF-NPSE (GAUSS) & 0.65 & 0.68 & 0.75 & 0.81 & \textbf{0.98} & \textbf{1.09} & \textbf{1.32} & \textbf{1.53} \\
CE-NSF & \textbf{0.60} & \textbf{0.56} & \textbf{0.61} & \textbf{0.74} & 1.58 & 1.30 & 1.50 & 2.31 \\
\addlinespace[4pt]
\multicolumn{9}{l}{\textit{Lotka--Volterra}} \\
Free-NSF & 0.97 & 0.98 & 0.99 & 1.00 & 2.93 & 3.84 & 5.43 & 7.45 \\
PF-NPSE (Langevin) & 0.97 & 0.98 & 0.98 & 0.99 & 2.12 & 2.51 & 2.17 & 2.51 \\
PF-NPSE (GAUSS) & 0.94 & 0.95 & 0.97 & 0.98 & \textbf{1.54} & 1.73 & 1.85 & 2.12 \\
CE-NSF & \textbf{0.71} & \textbf{0.66} & \textbf{0.66} & \textbf{0.76} & 1.57 & \textbf{1.43} & \textbf{1.19} & \textbf{1.03} \\
\bottomrule
\end{tabular}

\end{table}

Composition extrapolates well where the block posterior is easy to learn.  On Gaussian linear, PF-NPSE keeps the correct width at every size and its C2ST at $n=100$ is close to that of CE-NSF and far better than that of Free-NSF.  On the uniform-prior variant and on SIR it lies between the free baseline and the center-estimated estimator.  On the Gaussian mixture, Bernoulli GLM, and Lotka--Volterra it fails at every size, inside the training range as well as outside, as the free baselines do.  The mechanism differs from that of free conditioning: on Bernoulli GLM the composed width stays within a third of the reference at every size while the center error, in posterior standard deviations, grows roughly as $\sqrt{n}$, so the composed posterior has approximately the right width around a center whose error does not contract.  The GAUSS sampler improves width and C2ST over the published sampler on most benchmarks and sizes, most clearly on SIR, without changing the ordering.  Among the estimators in Table~\ref{tab:pfnpse}, CE-NSF has the best C2ST at $n=100$ on all six benchmarks, and PF-NPSE is the closest alternative only on Gaussian linear.  Configuration and per-case results accompany the code.

\section{Real-data application: mutation rate and growth from 1000 Genomes}
\label{app:popgen}

Population genetics supplies a real-data instance of the paper's
setting in which the center is a classical moment construction rather
than one of ours, and in which neither a ground-truth parameter nor a
tractable reference posterior exists: in-model evaluation is
truth-based, and real-data evaluation uses only reference-free
criteria.  The five main-text estimators are compared under the
main-text protocol; nothing else changes.

\subsection{Task and center}
Under the neutral coalescent \citep{kingman1982coalescent} with
exponential population growth, genomic window $i$ yields the summary
triple $x_i=(S_i,\pi_i,\eta_i)$ for $k$ haplotypes: segregating
sites, mean pairwise differences, and folded singletons.  All three
means are linear in the per-kb scaled mutation rate $\theta$, while
growth raises the rare-variant fraction $\eta/S$ and lowers $\pi/S$,
so a growth rate is identified jointly with $\theta$ from the
moments.  The growth parameter is the raw rate divided by four, which
keeps the rescaled residual inside the spline head's core interval
(Appendix~\ref{app:details}).

\smallskip\noindent
\begin{longtable}{@{}lp{0.7\linewidth}@{}}
\textbf{Prior} & $\theta\sim U[0.2,\,6]$ per kb; growth parameter
$\sim U[0,\,2]$ (raw rate $U[0,8]$) \\[2pt]
\textbf{Simulator} & coalescent tree of $k=20$ under exponential
growth (time-changed merging); mutations Poisson on branches, class
counts $\xi_b$, then $S=\sum_b\xi_b$,
$\pi=\sum_b b(k-b)\xi_b/\binom k2$, $\eta=\xi_1+\xi_{k-1}$ \\[2pt]
\textbf{Dimensions} & $\theta\in\R^{2}$, $x_i\in\R^{3}$ \\[2pt]
\textbf{Center} & simulator method of moments in the Watterson
tradition \citep{watterson1975number}: the growth rate from the two
observed ratios $(\bar\pi/\bar S,\bar\eta/\bar S)$ against moment
curves precomputed once by simulation, then $\theta$ from the scale
$\bar S$; a grid lookup with quadratic refinement at query time,
likelihood-free and contracting at $n^{-1/2}$ \\[2pt]
\textbf{Reference posterior} & none: the likelihood is intractable \\[2pt]
\textbf{References} & \citet{kingman1982coalescent,watterson1975number} \\
\end{longtable}

\subsection{Real data}
Summaries are computed from 1000 Genomes phase-3 chromosome 22
\citep{auton2015global}: $k=20$ haplotypes (ten YRI individuals),
biallelic PASS SNVs, non-overlapping 10\,kb windows with strict-mask
accessibility at least $0.9$, giving 404 windows (150{,}223
segregating sites).  An evaluation dataset of size $n$ subsamples $n$
windows at least 60\,kb apart; 400 such overlapping datasets are drawn per size
with a fixed seed shared by every estimator.  The real ratios
($\bar\pi/\bar S=0.258$ against the constant-size $0.282$;
$\bar\eta/\bar S=0.355$ against $0.297$) show the rare-variant excess
of a growing population.  Window independence is approximate and the
neutral model is misspecified for real genomes; the criteria below do
not condition on the model being correct.

\subsection{Reference-free criteria}
With no ground truth, each (estimator, $n$) is scored over the 400
subsampled datasets by:
\begin{itemize}
\item \textbf{Consistency}: the fraction of datasets whose two marginal
central $95\%$ intervals both contain the full-data moment fit
($\theta=1.70$, raw growth $0.75$, from all 404 windows), and the rms
standardized distance to it (radius). Because the fit is estimated from windows that overlap the
evaluation subsets and uses the same moment functional as the CE center, this is a consistency
diagnostic that favours CE-NPE, not a test of nominal coverage.
\item \textbf{Predictive}: the energy distance \citep{szekely2013energy} between
posterior-predictive window summaries (one simulated window per posterior draw, draws clipped to
the prior box) and held-out real windows, on standardized summaries, over the first 64 datasets
per size; this criterion is independent of the anchor.
\item \textbf{Posterior readout}: the typical posterior of
$(\theta,\gamma)$ at each $n$ (Figure~\ref{fig:popgen}).
\end{itemize}

\subsection{Results}
\paragraph{In-model SBC and coverage give complementary diagnostics.}
Using simulation-based calibration \citep{talts2018validating}, we assess posterior ranks alongside joint coverage (Table~\ref{tab:popgen2-sim}). At $5\times$ the largest training size the free heads are over-wide or rank-miscalibrated, Scale-G's learned center has substantial location error, and CE-G and CE-NSF have the lowest SBC KS statistics although their joint coverage is below nominal; Free-NSF has a radius and coverage closer to their targets but a larger SBC KS statistic. No single diagnostic therefore establishes posterior calibration (Section~\ref{sec:hierarchy}).

\paragraph{On the real genome CE-NPE maintains consistency with the full-data moment fit.}
The consistency of CE-G and CE-NSF with the full-data fit is flat from $n=10$ to $n=100$ while every other estimator's decays, their standardized distance to the fit stays constant while the others' grows, and their predictive energy distance matches the best free baseline within seed variability (Table~\ref{tab:popgen2-real}). At $n=100$ their posteriors sit on the full-data fit with a positive growth rate, as expected for an expanding population, whereas Scale-G halves the growth rate and shifts $\theta$ by more than a posterior SD (Figure~\ref{fig:popgen}).

\begin{table}[H]
\centering
\caption{\textbf{In-model protocol, every size} (seed means): rms standardized truth distance (radius; target $1$), joint $95\%$ HPD coverage, and maximum marginal SBC KS; bold: within one seed SD of the best per column.}
\label{tab:popgen2-sim}
\footnotesize
\resizebox{\linewidth}{!}{\begin{tabular}{lrrrrrrrrrrrr}
\toprule
 & \multicolumn{3}{c}{$n=10$} & \multicolumn{3}{c}{$n=20$} & \multicolumn{3}{c}{$n=50$} & \multicolumn{3}{c}{$n=100$} \\
\cmidrule(lr){2-4}\cmidrule(lr){5-7}\cmidrule(lr){8-10}\cmidrule(lr){11-13}
 & rad. & cov. & KS & rad. & cov. & KS & rad. & cov. & KS & rad. & cov. & KS \\
\midrule
Free-G & 1.04 & 0.94 & 0.052 & 0.87 & 0.98 & 0.063 & 0.82 & 1.00 & 0.087 & 0.82 & \textbf{0.98} & 0.131 \\
Free-NSF & 0.98 & \textbf{0.95} & \textbf{0.033} & 0.92 & 0.98 & 0.062 & 0.92 & 0.97 & 0.110 & \textbf{1.01} & \textbf{0.92} & 0.224 \\
Scale-G & \textbf{1.00} & \textbf{0.95} & 0.042 & 1.06 & 0.96 & 0.077 & 1.34 & 0.87 & 0.167 & 1.64 & 0.75 & 0.181 \\
CE-G & 0.97 & \textbf{0.95} & 0.046 & \textbf{0.98} & \textbf{0.96} & \textbf{0.053} & 1.14 & \textbf{0.94} & 0.091 & 1.24 & 0.87 & \textbf{0.075} \\
CE-NSF & \textbf{1.00} & \textbf{0.95} & 0.046 & 0.95 & \textbf{0.95} & \textbf{0.048} & \textbf{1.04} & 0.91 & \textbf{0.039} & \textbf{1.10} & 0.86 & \textbf{0.068} \\
\bottomrule
\end{tabular}
}
\end{table}

\begin{table}[H]
\centering
\caption{\textbf{1000 Genomes windows, every size} (seed means): consistency with and standardized distance to the full-data moment fit, and posterior-predictive energy distance to held-out windows; bold: within one seed SD of the best per column.}
\label{tab:popgen2-real}
\footnotesize
\resizebox{\linewidth}{!}{\begin{tabular}{lrrrrrrrrrrrr}
\toprule
 & \multicolumn{3}{c}{$n=10$} & \multicolumn{3}{c}{$n=20$} & \multicolumn{3}{c}{$n=50$} & \multicolumn{3}{c}{$n=100$} \\
\cmidrule(lr){2-4}\cmidrule(lr){5-7}\cmidrule(lr){8-10}\cmidrule(lr){11-13}
 & cons. & rad. & energy & cons. & rad. & energy & cons. & rad. & energy & cons. & rad. & energy \\
\midrule
Free-G & 0.79 & 1.32 & 0.162 & 0.81 & \textbf{1.32} & 0.132 & 0.77 & 1.32 & 0.094 & 0.66 & 1.59 & 0.085 \\
Free-NSF & 0.80 & 1.28 & 0.158 & 0.79 & 1.52 & \textbf{0.123} & 0.72 & 1.63 & 0.090 & 0.58 & 1.92 & \textbf{0.077} \\
Scale-G & \textbf{0.83} & 1.21 & 0.158 & 0.77 & 1.44 & 0.129 & 0.58 & 1.87 & 0.095 & 0.28 & 2.79 & 0.085 \\
CE-G & \textbf{0.83} & 1.23 & \textbf{0.153} & 0.81 & \textbf{1.32} & \textbf{0.122} & \textbf{0.85} & \textbf{1.22} & \textbf{0.087} & \textbf{0.83} & \textbf{1.23} & 0.079 \\
CE-NSF & \textbf{0.83} & \textbf{1.18} & 0.160 & \textbf{0.83} & \textbf{1.28} & \textbf{0.121} & \textbf{0.86} & \textbf{1.22} & \textbf{0.088} & \textbf{0.85} & 1.28 & 0.082 \\
\bottomrule
\end{tabular}
}
\end{table}

\end{document}